%% 
%% Copyright 2007, 2008, 2009 Elsevier Ltd
%% 
%% This file is part of the 'Elsarticle Bundle'.
%% ---------------------------------------------
%% 
%% It may be distributed under the conditions of the LaTeX Project Public
%% License, either version 1.2 of this license or (at your option) any
%% later version.  The latest version of this license is in
%%    http://www.latex-project.org/lppl.txt
%% and version 1.2 or later is part of all distributions of LaTeX
%% version 1999/12/01 or later.
%% 
%% The list of all files belonging to the 'Elsarticle Bundle' is
%% given in the file `manifest.txt'.
%% 

%% Template article for Elsevier's document class `elsarticle'
%% with numbered style bibliographic references
%% SP 2008/03/01

\documentclass[12pt]{elsarticle}

%% Use the option review to obtain double line spacing
%% \documentclass[authoryear,preprint,review,12pt]{elsarticle}

%% Use the options 1p,twocolumn; 3p; 3p,twocolumn; 5p; or 5p,twocolumn
%% for a journal layout:
%% \documentclass[final,1p,times]{elsarticle}
%% \documentclass[final,1p,times,twocolumn]{elsarticle}
%% \documentclass[final,3p,times]{elsarticle}
%% \documentclass[final,3p,times,twocolumn]{elsarticle}
%% \documentclass[final,5p,times]{elsarticle}
%% \documentclass[final,5p,times,twocolumn]{elsarticle}

%% For including figures, graphicx.sty has been loaded in
%% elsarticle.cls. If you prefer to use the old commands
%% please give \usepackage{epsfig}

%% The amssymb package provides various useful mathematical symbols
\usepackage{amsmath,mathtools,amssymb}
\usepackage{subcaption}
\usepackage{tikz}
\usepackage{soul}
\usepackage[framemethod=tikz]{mdframed}
\usepackage{pgfplots}
\usetikzlibrary{shapes,arrows,shapes.geometric,fit,matrix,backgrounds,plotmarks}
\PassOptionsToPackage{table}{xcolor}
\usepackage{graphicx}
\usepackage{wasysym}
\usepackage{algorithm}
\usepackage{algorithmicx}
\usepackage[noend]{algpseudocode}
\newtheorem{myex}{Example}

\newtheorem{mydef}{Def.}
\newtheorem{myprob}{Problem}

\newtheorem{mythm}{Theorem}
\newtheorem{proof}{Proof}
\newtheorem{myprop}{Proposition}
\newtheorem{mycor}{Corollary}
\newtheorem{mylemma}{Lemma}
\newtheorem{myremark}{Remark}

\usepackage{setspace}
\usepackage{framed}
\usepackage{fancybox}
\usepackage{comment}
\usepackage[linktocpage=true,breaklinks]{hyperref}

\definecolor{cornellred}{rgb}{0.7, 0.11, 0.11}
\definecolor{darkblue}{rgb}{0.0, 0.0, 0.55}

%% The amsthm package provides extended theorem environments
%% \usepackage{amsthm}

%% The lineno packages adds line numbers. Start line numbering with
%% \begin{linenumbers}, end it with \end{linenumbers}. Or switch it on
%% for the whole article with \linenumbers.
\usepackage{lineno}

\journal{Artificial Intelligence}

\begin{document}
%\linenumbers

\begin{frontmatter}

%% Title, authors and addresses

%% use the tnoteref command within \title for footnotes;
%% use the tnotetext command for theassociated footnote;
%% use the fnref command within \author or \address for footnotes;
%% use the fntext command for theassociated footnote;
%% use the corref command within \author for corresponding author footnotes;
%% use the cortext command for theassociated footnote;
%% use the ead command for the email address,
%% and the form \ead[url] for the home page:
%% \title{Title\tnoteref{label1}}
%% \tnotetext[label1]{}
%% \author{Name\corref{cor1}\fnref{label2}}
%% \ead{email address}
%% \ead[url]{home page}
%% \fntext[label2]{}
%% \cortext[cor1]{}
%% \address{Address\fnref{label3}}
%% \fntext[label3]{}

\title{A note on the complexity of the causal ordering problem}

%% use optional labels to link authors explicitly to addresses:
%% \author[label1,label2]{}
%% \address[label1]{}
%% \address[label2]{}

\author[ibm]{Bernardo Gon\c{c}alves}
\ead{begoncalves@acm.org}
%\corref{bgonc@lncc.br}}
\author[lncc]{Fabio Porto}
\ead{fporto@lncc.br}
\address[ibm]{IBM Research, S\~ao Paulo, Brazil}%\fnref{fn1}}
\address[lncc]{National Laboratory for Scientific Computing (LNCC), Petr\'opolis, Brazil}%\fnref{fn1}}

\begin{abstract}
In this note we provide a concise report on the complexity of the causal ordering problem, originally introduced by Simon to reason about causal dependencies implicit in systems of mathematical equations. We show that Simon's classical algorithm to infer causal ordering is NP-Hard---an intractability previously guessed but never proven. We present then a detailed account based on Nayak's suggested algorithmic solution (the best available), which is dominated by computing transitive closure---bounded in time by $O(|\mathcal V|\cdot |\mathcal S|)$, where $\mathcal S(\mathcal E, \mathcal V)$ is the input system structure composed of a set $\mathcal E$ of equations over a set $\mathcal V$ of variables with number of variable appearances (density) $|\mathcal S|$. We also comment on the potential of causal ordering for emerging applications in large-scale hypothesis management and analytics. 

\end{abstract}

\begin{keyword}
$\!\!$Causal ordering \sep Causal reasoning \sep Structural equations \sep Hypothesis management.

\end{keyword}

\end{frontmatter}

%% \linenumbers

%\begin{mdframed}[hidealllines=true,backgroundcolor=gray!15]
%\end{mdframed}

%% main text
\section{Introduction}\label{sec:intro}
\noindent
The causal ordering problem has long been introduced by Simon as a technique to infer the causal dependencies implicit in a deterministic mathematical model \cite{simon1953}. For instance, let $f_1(x_1)$ and $f_2(x_1, x_2)$ be two equations defined over variables $x_1, x_2$. Then the causal ordering problem is to infer all existing causal dependencies, in this case the only one is $(x_1, x_2)$, read `$x_2$ causally depends on $x_1$.' 
It is obtained by first matching each equation to a variable that appears in it, e.g., $f_2 \mapsto x_2$. Intuitively, this means that $f_2$ is to be assigned to compute the value of $x_2$---using the value of $x_1$, which establishes a direct causal dependency between these two variables. Indirect dependencies may then arise and can be computed, which is specially useful when the system of equations is very large. 

Causal ordering inference can then support users with uncertainty management, say, towards the discovery of what is wrong with a model for enabling efficient and effective modeling intervention. If multiple values of $x_1$ are admissible for a modeler, then as a user of the causal ordering machinery she has support to track their influence on the values of $x_2$. One major application for that is probabilistic database design \cite{goncalves2014}.

Back in the 50th's, Simon was motivated by studies in econometrics and not very concerned with the algorithmic aspects of the Causal Ordering Problem (COP). Yet his vision on COP and its relevance turned out to be influential in the artificial intelligence literature. 
%%%
In a more recent study, Dash and Druzdzel revisit and motivate it in light of modern applications \cite{druzdzel2008}. They show that Simon's original algorithm, henceforth the Causal Ordering Algorithm (COA), is correct in the sense that any valid causal ordering that can be extracted from a self-contained (well-posed) system of equations must be compatible with the one that is output by COA \cite{druzdzel2008}. Their aim has also been (sic.) to validate decades of research that has shown the causal ordering to provide a powerful tool for operating on models. In addition to the result on the correctness of COA, their note also provides a convenient survey of related work that connects to Simon's early vision on causal reasoning. 

However, Simon's formulation of COP into COA---originally in \cite{simon1953}, and reproduced in \cite{druzdzel2008}, turns out to be intractable. 
There is a need to distinguish Simon's COA from COP itself. 
The former still seems to be the main entry point to the latter in the specialized literature. 
In fact, there is a lack of a review on the computational properties of COA---and as we show in this note, it tries to address an NP-Hard problem as one of its steps. 
The interested reader who needs an efficient algorithmic approach to address COP in a real, large-scale application can only scarcely find some comments spread through Nayak \cite[p. 287-91]{nayak1994}, and then Iwasaki and Simon \cite[p. 149]{simon1994} and Pearl \cite[p. 226]{pearl2000} both pointing to the former. Regarding Simon's COA itself, the classical approach to COP, it is only Nayak who suggests in words that (sic.) `[it] is a worst-case exponential time algorithm' \cite[p. 37]{nayak1996}. We discuss this ambiguity that exists in the most up-to-date literature shortly in \S\ref{subsec:related-work}. 

COP is significant also in view of emerging applications in large-scale hypothesis management and analytics \cite{goncalves2014}. The modeling of physical and socio-economical systems as a set of mathematical equations is a traditional approach in science and engineering and a very large bulk of models exist which are ever more available in machine-readable format. Simon's early vision on the automatic extraction of the ``causal mechanisms'' implicit in (large-scale) models for the sake of informed intervention finds nowadays new applications in the context of open simulation laboratories \cite{goncalves2015cise}, large-scale model management \cite{haas2011} and online, shared model repositories \cite{hunter2003,hines2004,chelliah2013}.

In this paper we review the causal ordering problem (\S\ref{sec:problem}). Our core contributions are (\S\ref{sec:coa}) to originally show that COA aims at addressing an NP-Hard problem, confirming Nayak's earlier intuition; and then (\S\ref{sec:nayak}) to organize into a concise yet complete note his hints to solve COP in polynomial time.

%%%%%%%%%%%%%%%%%%%%%%%%%%%%%%%%%%%%%%%%%%%%%%%%
\subsection{Informal Preliminaries}\label{subsec:preliminaries}
\noindent
Given a system of mathematical equations involving a set of variables, the \emph{causal ordering problem} consists in detecting the hidden asymmetry between variables. As an intermediate step towards it, one needs to establish a one-to-one mapping between equations and variables \cite{simon1953}. 

For instance, Einstein's famous equation $E=m\,c^2$ states the equivalence of mass and energy, summarizing (in its scalar version) a theory that can be imposed two different asymmetries for different applications. Say, given a fixed amount of mass $m=m_0$ (and recalling that $c$ is a constant), find the particle's relativistic rest energy $E$; or rather, given the particle's rest energy, find its mass or potential for nuclear fission. That is, the causal ordering depends on what variables are set as input and which ones are ``influenced'' by them. Suppose there is uncertainty, say, a user considers two values to set the mass, $\!m=m_0$ or $m= m_0^\prime$. $\!$Then this uncertainty will flow through the causal ordering and affect all variables that are dependent on it (energy $E$). 

For really large systems, having structures say in the order of one million equations \cite{goncalves2015c}, the causal ordering problem is critical to provide more specific accountability on the accuracy of the system---viz., what specific variables and subsystems account for possibly inaccurate outcomes. This is key for managing and tracking the uncertainty of alternative modeling variations systematically \cite{goncalves2015cise,goncalves2015c}.

\subsection{Related Work}\label{subsec:related-work}
\noindent

\textbf{COA}. Dash and Druzdzel \cite{druzdzel2008} provide a high-level description of how Simon's COA \cite{simon1953} proceeds to discover the causal dependencies implicit in a structure. It returns a `partial' causal mapping: from partitions on the set of equations to same-cardinality partitions on the set of variables---a `total' causal mapping would instead map every equation to exactly one variable.  %This is because it clusters strongly coupled variables to avoid cyclic causal relationships between variables, which would render cycles in its induced causal graph. 

They show then that any valid total causal mapping produced over a structure must be consistent with COA's partial causal mapping. 
%%%
Nonetheless, no observation at all is made regarding COA's computational properties in \cite{druzdzel2008}, leaving in the most up-to-date literature an impression that Simon's COA is the way to go for COP. In this note we show that Simon's COA tries to address an NP-Hard problem in one of its steps, and then clearly recommend Nayak's efficient approach to COP as a fix to COA.

\textbf{COP}. Inspired by Serrano and Gossard's work on constraint modeling and reasoning \cite{serrano1987}, Nayak describes an approach that is provably efficient to process the causal ordering: extract any valid total causal mapping and then compute the transitive closure of the direct causal dependencies, viz, the causal ordering. Nayak's is a provably correct approach to COP, as all valid `total' causal mappings must have the same causal ordering. 

In this note we arrange those insights into a concise yet detailed recipe that can be followed straightforwardly to solve COP efficiently.

%%%%%%%%%%%%%%%%%%%%%%%%%%%%%%%%%%%%%%%%%%%%%%
\section{The Causal Ordering Problem}\label{sec:problem}
\noindent
We start with some preliminaries on notation and basic concepts to eventually state COP formally.

%%%%%%%%%%%%%%%%%%%%%%%%%%%%%%%%%%%%%%%%%%%%%%
%\subsection{Notation and Basic Concepts}\label{subsec:notation}
%\noindent
For an equation $f(x_1, x_2, ..., x_\ell)=0$, we will write $Vars(f) \triangleq \{ x_1,\, x_2,\, ...,\, x_\ell \}$ to denote the set of variables that appear in it.

\begin{mydef}\label{def:structure}
A \textbf{structure} is a pair $\mathcal S(\mathcal E, \mathcal V)$, where $\mathcal E$ is a set of equations over a set $\mathcal V\!$ of variables, $\mathcal V \triangleq \bigcup_{f \in\, \mathcal E} Vars(f)$, such that:
\begin{itemize}
\item[(a)] In any subset $\mathcal E^\prime \subseteq \mathcal E$ of $k>0$ equations of the structure, at least $k$ different variables appear, i.e., $|\mathcal E^\prime| \leq |\mathcal V^\prime|$;

\item[(b)] In any subset of $k>0$ equations in which $r$ variables appear, $k \leq r$, if the values of any $(r - k)$ variables are chosen arbitrarily, then the values of the remaining $k$ variables can be determined uniquely---finding these unique values is a matter of solving the equations.
\end{itemize}
\end{mydef}

Note in Def.~\ref{def:structure} that structures are composed of equations, and variables are only part of them indirectly as part of equations. Accordingly, set inclusion and all set operations such as union, intersection and difference are computed w.r.t. the equations. That is, if $\mathcal S(\mathcal E, \mathcal V)$ and $\mathcal S^\prime(\mathcal E^\prime, \mathcal V^\prime)$ are structures, then we write $\mathcal S^\prime \!\subset \mathcal S$ when $\mathcal E^\prime \!\subset \mathcal E$. %Likewise, $\mathcal T = \mathcal S \setminus \mathcal S^\prime$ is the structure having set of equations $\mathcal E^\prime$ removed from set $\mathcal E$.} 
An additional operation for `variables elimination' shall be used. We write $\mathcal T := \mathcal S \div \mathcal S^\prime$, to denote a structure $\mathcal T$ resulting from both (i) removing equations $\mathcal E^\prime$ from $\mathcal E$, and (ii) enforcing elimination of variables $\mathcal V^\prime = \bigcup_{f \in \mathcal E^\prime} Vars(f)$ from $\mathcal E \setminus \mathcal E^\prime$.

\begin{mydef}\label{def:complete}
Let $\mathcal S(\mathcal E, \mathcal V)$ be a structure. We say that $\mathcal S$ is self-contained or \textbf{complete} if $|\mathcal E|=|\mathcal V|$.
\end{mydef}
\noindent
In short, COP will be concerned with systems of equations that are `structural' (Def. \ref{def:structure}) and `complete' (Def. \ref{def:complete}), viz., that have as many equations as variables and no subset of equations has fewer variables than equations.\footnote{Also, for inferring causal ordering the systems of equations given as input is expected to be `independent,' i.e., can only have non-redundant equations.}

Now Def.~\ref{def:tcm} introduces a data structure that is an intermediate product towards addressing COP. We shall state COP formally in the sequel.

\begin{mydef}\label{def:tcm}
$\!$Let $\mathcal S(\mathcal E, \mathcal V)\!$ be a complete structure. $\!\!$Then a \textbf{total causal mapping} over $\mathcal S$ is a bijection $\varphi\!: \mathcal E \to \mathcal V$ such that, for all $f \!\in \mathcal E$, if $\varphi(f)=x$ then $x \!\in Vars(f)$.
\end{mydef}

Note that such total causal mapping $\varphi$ induces a set $C_{\varphi}$ of \emph{direct causal dependencies} (see Eq. \ref{eq:direct-causal-dependencies}), which shall give us the \emph{causal dependencies} (Def.~\ref{def:causal-dependency}). 
\begin{eqnarray}
\!\!\!C_{\varphi} \!= \{\, (x_a, x_b) \,| \;\text{there exists}\; f \!\in \mathcal E \;\text{such that}\; \varphi(f) = x_b \;\text{and}\; x_a \in Vars(f)  \,\}
\label{eq:direct-causal-dependencies}
\end{eqnarray}
\vspace{-15pt}
\begin{mydef}\label{def:causal-dependency}
Let $\mathcal S(\mathcal E, \mathcal V)$ be a structure with variables $x_a, x_b \in \mathcal V$, and $\varphi$ a total causal mapping over $\mathcal S$ inducing set of direct causal dependencies $C_{\varphi}$ and indirectly a transitive closure $C^+_{\varphi}$. We say that $(x_a, x_b)$ is a \textbf{direct causal dependency} in $\mathcal S$ if $(x_a, x_b) \in C_{\varphi}$, and that $(x_a, x_b)$ is a \textbf{causal dependency} in $\mathcal S$ if $(x_a, x_b) \in C^+_{\varphi}$.
\end{mydef}

In other words, $(x_a, x_b)$ is in $C_{\varphi}$ iff $x_b$ direct and causally depends on $x_a$, given the causal asymmetries induced by $\varphi$. Then by transitive reasoning on $C_{\varphi}$ we shall be able to infer the transitive closure $C_{\varphi}^+$, which is the \emph{causal ordering}. Now we can state COP more formally as Problem~\ref{prob:cop}.

\begin{myprob}\label{prob:cop}
\emph{(COP)}. Given a complete structure $\mathcal S(\mathcal E, \mathcal V)$, find a total causal mapping $\varphi$ over $\mathcal S$ and derive a set $C_\varphi^+$ of causal dependencies induced by it.
\end{myprob}

In the sequel we shall see two different algorithmic approaches to COP (Problem~\ref{prob:cop}). First, the classical approach informally described by Simon in the 50th's \cite{simon1953}, and reproduced recently in \cite{druzdzel2008}; and then Nayak's one proposed in the 90th's \cite{nayak1994}. %, after essential \emph{matching algorithms} came about in the 70th's \cite[p.~765-6]{cormen2009}. 
We shall present the algorithms and analyze their corresponding complexities.

%%%%%%%%%%%%%%%%%% SIMON STUFF %%%%%%%%%%%%%%%%%%%%%%%%

%%%%%%%%%%%%%%%%%%%%%%%%%%%%%%%%%%%%%%%%%%%%%%
\section{Simon's Causal Ordering Algorithm and its Complexity}\label{sec:coa}
\noindent
We introduce now additional concepts that are specific to Simon's COA.

\begin{mydef}
\label{def:minimal}
Let $\mathcal S$ be a structure. We say that $\mathcal S$ is \textbf{minimal} if it is complete and there is no complete substructure $\mathcal S^\prime \!\subset \mathcal S$. 
\end{mydef}

\begin{myex}
Consider structure $\mathcal S(\mathcal E, \mathcal V)$, where $\mathcal E \!=\! \{\, f_1(x_1),\; f_2(x_2),\; f_3(x_3),$ $f_4(x_1, x_2, x_3, x_4, x_5),\; f_5(x_1, x_3, x_4, x_5),\; f_6(x_4, x_6),\; f_7(x_5, x_7) \,\}$. Note that $\mathcal S$ is complete, as $|\mathcal E|\!=\!|\mathcal V|\!=\!7$, but not minimal. There are exactly three minimal substructures $\mathcal{S}_1, \mathcal{S}_2, \mathcal{S}_3 \subset \mathcal S$, whose sets of equations are $\mathcal{E}_1 \!=\! \{f_1(x_1)\},\, \mathcal{E}_2 \!=\! \{f_2(x_2)\},\,\mathcal{E}_3 \!=\! \{f_3(x_3)\}$. $\Box$
\label{ex:structure}
\end{myex}

Now Lemma~\ref{lemma:disjoint} and Proposition~\ref{prop:disjoint} are stated to back up a `disjointness' assumption that is made by COA upon minimal structures (Def.~\ref{def:minimal}). The proof we present here for Proposition~\ref{prop:disjoint} is a conveniently derived alternative to Simon's own proof to his original `theorem 3.2' \cite[p.~59]{simon1953}.

%%%%%%%%%%%%%%%%%%% ADDITION %%%%%%%%%%%%%%%%%%%%%%
\begin{mylemma}
\label{lemma:disjoint}
Let $\mathcal S_1(\mathcal E_1, \mathcal V_1)$ and $\mathcal S_2(\mathcal E_2, \mathcal V_2)$ be structures. If $\;\mathcal V_1 \cap \mathcal V_2 = \varnothing$ then $\mathcal S_1 \cap \mathcal S_2 = \varnothing$ (i.e., $\mathcal E_1 \cap \mathcal E_2 = \varnothing$). That is, disjointness of variables is strong enough to warrant disjointness of equations.  
\end{mylemma}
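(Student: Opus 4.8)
The plan is to argue by contradiction, pushing the disjointness of the variable sets down to the level of a single shared equation. First I would suppose, for contradiction, that $\mathcal E_1 \cap \mathcal E_2 \neq \varnothing$ and fix some equation $f \in \mathcal E_1 \cap \mathcal E_2$. The whole argument then hinges on two elementary facts about structures.

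The first fact is that every equation of a structure contains at least one variable, i.e., $Vars(f) \neq \varnothing$. This is exactly condition (a) of Definition~\ref{def:structure} applied to the singleton $\mathcal E' = \{f\}$: with $k = 1$ it yields $1 = |\mathcal E'| \leq |\mathcal V'|$, so at least one variable appears in $f$. The second fact is that the variables of any equation of a structure are contained in that structure's variable set: since $\mathcal V_i = \bigcup_{g \in \mathcal E_i} Vars(g)$ by definition, and $f \in \mathcal E_i$, we have $Vars(f) \subseteq \mathcal V_i$ for $i = 1, 2$. Combining these, $Vars(f) \subseteq \mathcal V_1 \cap \mathcal V_2 = \varnothing$, contradicting $Vars(f) \neq \varnothing$. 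Hence no such $f$ exists and $\mathcal E_1 \cap \mathcal E_2 = \varnothing$, which by the stated convention that set operations on structures are computed over their equation sets is precisely $\mathcal S_1 \cap \mathcal S_2 = \varnothing$.

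I expect the only real subtlety --- and the single place where an honest proof must do any work at all --- to be the justification that $Vars(f) \neq \varnothing$. If equations were permitted to be variable-free constants, the lemma would fail outright: such an equation could sit in both $\mathcal E_1$ and $\mathcal E_2$ while contributing nothing to either $\mathcal V_1$ or $\mathcal V_2$, making the equation sets intersect even though the variable sets do not. It is therefore essential to invoke axiom (a) of the structure definition rather than treat non-emptiness of $Vars(f)$ as self-evident. Everything else is bookkeeping with the defining identity $\mathcal V = \bigcup_{f \in \mathcal E} Vars(f)$.
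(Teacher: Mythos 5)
Your proof is correct and is essentially the paper's own argument: assume a shared equation $f$, note $|Vars(f)|\geq 1$ by Def.~\ref{def:structure}(a) and $Vars(f)\subseteq\mathcal V_1\cap\mathcal V_2=\varnothing$, and derive the contradiction. Your explicit justification of $Vars(f)\neq\varnothing$ via the singleton case of condition (a) is a slightly more careful spelling-out of a step the paper asserts directly, but the route is the same.
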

\begin{proof}
Let $\mathcal V_1 \cap \mathcal V_2 = \varnothing$. Now by contradiction assume $\mathcal S_1 \cap \mathcal S_2 \neq \varnothing$, then there must be at least one shared equation $f \in \mathcal E_1, \mathcal E_2$. Since both $\mathcal S_1, \mathcal S_2$ are structures, by Def.~\ref{def:structure} we know that $|Vars(f)| \geq 1$ and $Vars(f) \subseteq \mathcal V_1 \cap \mathcal V_2$. Yet $\mathcal V_1 \cap \mathcal V_2 = \varnothing$. \lightning. Therefore $\mathcal V_1 \cap \mathcal V_2 = \varnothing$ implies $\mathcal S_1 \cap \mathcal S_2 = \varnothing$.  $\Box$
\end{proof} 
%%%
\begin{mydef}
\label{def:disjoint}
Let $\mathcal S_1(\mathcal E_1, \mathcal V_1)$ and $\mathcal S_2(\mathcal E_2, \mathcal V_2)$ be structures. Then we say that they are \textbf{disjoint} if $\;\mathcal V_1 \cap \mathcal V_2 = \varnothing$.
\end{mydef}
%%%
\begin{myprop}
\label{prop:disjoint}
Let $\mathcal S$ be a complete structure. If $\mathcal S_1, \mathcal S_2 \subset \mathcal S$ are any different minimal substructures of $\mathcal S$, then they are disjoint.  
\end{myprop}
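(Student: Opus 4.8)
The plan is to argue by contradiction through a Hall-type counting argument in the spirit of the marriage theorem. For a set of equations $\mathcal E' \subseteq \mathcal E$ write $\mathcal V(\mathcal E') \triangleq \bigcup_{f \in \mathcal E'} Vars(f)$ for the variables it touches; then condition (a) of Def.~\ref{def:structure} reads $|\mathcal E'| \le |\mathcal V(\mathcal E')|$ for every nonempty $\mathcal E'$, completeness of a substructure $\mathcal S'(\mathcal E', \mathcal V')$ reads $|\mathcal E'| = |\mathcal V(\mathcal E')|$, and minimality of $\mathcal S_i$ says that no nonempty proper $\mathcal E'' \subsetneq \mathcal E_i$ satisfies $|\mathcal E''| = |\mathcal V(\mathcal E'')|$. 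The engine of the proof is the submodularity of $\mathcal E' \mapsto |\mathcal V(\mathcal E')|$: since $\mathcal V(\mathcal E_1 \cup \mathcal E_2) = \mathcal V(\mathcal E_1) \cup \mathcal V(\mathcal E_2)$ and $\mathcal V(\mathcal E_1 \cap \mathcal E_2) \subseteq \mathcal V(\mathcal E_1) \cap \mathcal V(\mathcal E_2)$, counting gives $|\mathcal V(\mathcal E_1 \cup \mathcal E_2)| + |\mathcal V(\mathcal E_1 \cap \mathcal E_2)| \le |\mathcal V(\mathcal E_1)| + |\mathcal V(\mathcal E_2)|$.

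First I would chain the inequalities. Using inclusion--exclusion on the finite equation sets, then condition (a) applied to $\mathcal E_1 \cup \mathcal E_2$ and to $\mathcal E_1 \cap \mathcal E_2$, then submodularity, and finally the completeness of $\mathcal S_1$ and $\mathcal S_2$, I obtain
\begin{equation*}
|\mathcal E_1| + |\mathcal E_2| = |\mathcal E_1 \cup \mathcal E_2| + |\mathcal E_1 \cap \mathcal E_2| \le |\mathcal V(\mathcal E_1 \cup \mathcal E_2)| + |\mathcal V(\mathcal E_1 \cap \mathcal E_2)| \le |\mathcal V(\mathcal E_1)| + |\mathcal V(\mathcal E_2)| = |\mathcal E_1| + |\mathcal E_2|.
\end{equation*}
Both ends coincide, so every inequality is tight; in particular $|\mathcal V(\mathcal E_1 \cap \mathcal E_2)| = |\mathcal E_1 \cap \mathcal E_2|$.

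Next I would exploit this equality. If $\mathcal E_1 \cap \mathcal E_2 \ne \varnothing$, then the substructure carried by $\mathcal E_1 \cap \mathcal E_2$ is complete; moreover, because $\mathcal S_1 \ne \mathcal S_2$, neither equation set can contain the other (a proper complete inclusion would itself violate minimality, and equality would force $\mathcal S_1 = \mathcal S_2$), so $\mathcal E_1 \cap \mathcal E_2$ is a nonempty proper subset of, say, $\mathcal E_1$---a complete proper substructure of $\mathcal S_1$, contradicting its minimality. Hence $\mathcal E_1 \cap \mathcal E_2 = \varnothing$. Feeding this back into the now-saturated chain gives $|\mathcal V(\mathcal E_1 \cup \mathcal E_2)| = |\mathcal E_1| + |\mathcal E_2| = |\mathcal V_1| + |\mathcal V_2|$, while $\mathcal V(\mathcal E_1 \cup \mathcal E_2) = \mathcal V_1 \cup \mathcal V_2$; therefore $|\mathcal V_1 \cup \mathcal V_2| = |\mathcal V_1| + |\mathcal V_2|$, which forces $\mathcal V_1 \cap \mathcal V_2 = \varnothing$, the desired disjointness (Def.~\ref{def:disjoint}). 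Lemma~\ref{lemma:disjoint} is not needed here, though it confirms the two readings of disjointness are interchangeable.

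The main obstacle I anticipate is not the counting itself but the bookkeeping around the boundary cases: making sure condition (a) is invoked only on the possibly-empty set $\mathcal E_1 \cap \mathcal E_2$ after separating the empty case (where both sides vanish and the chain still closes), and cleanly arguing from $\mathcal S_1 \ne \mathcal S_2$ that the shared substructure is genuinely \emph{proper}, so that minimality is actually contradicted. A secondary subtlety is the final bridge from equation-disjointness back to the variable-disjointness asked for by the statement, which is precisely why the saturation of the \emph{union} term must be retained from the inequality chain rather than discarded once the intersection term is handled.
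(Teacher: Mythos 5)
Your proof is correct, and it takes a genuinely different route from the paper's. The paper argues by case analysis on the number $k$ of shared equations ($0 \le k < \ell$), showing in each case that sharing $k$ equations forces at least $k+1$ shared variables (else the shared part would itself be a complete substructure, violating minimality), whence $|\mathcal E_1 \cup \mathcal E_2| > |\mathcal V_1 \cup \mathcal V_2|$ contradicts condition (a) of Def.~\ref{def:structure}. You instead run a single submodularity/tightness chain: inclusion--exclusion on the equation sets, condition (a) applied to the union and the intersection, submodularity of $\mathcal E' \mapsto |\mathcal V(\mathcal E')|$, and completeness of $\mathcal S_1, \mathcal S_2$ squeeze the chain into equality, so the intersection of the two equation sets carries a complete substructure; minimality then forces $\mathcal E_1 \cap \mathcal E_2 = \varnothing$, and the saturated union term yields $\mathcal V_1 \cap \mathcal V_2 = \varnothing$. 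Your route is essentially a clean derivation of Simon's original `theorem 3.1' (the intersection of complete structures is complete), which the paper deliberately bypasses in favor of its own case analysis; what you gain is uniformity (no case split on $k$, no separate handling of $\ell = 1$ versus $\ell \ge 2$) and an explicit record of why the union term saturates, which you correctly identify as the step that upgrades equation-disjointness to the variable-disjointness the statement actually demands. The one point to keep at the paper's level of rigor is that treating $\mathcal E_1 \cap \mathcal E_2$ as a bona fide (complete) substructure implicitly inherits condition (b) of Def.~\ref{def:structure}; the paper's own proof makes the same informal inheritance, so this is consistent rather than a gap.
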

\begin{proof}
We show the statement by case analysis and then contradiction out of Defs.~\ref{def:structure}--\ref{def:complete} and Defs.~\ref{def:minimal}--\ref{def:disjoint}. See \ref{app:disjoint}. $\Box$
\end{proof} 
%%%%%%%%%%%%%%%%% CLOSE - ADDITION %%%%%%%%%%%%%%%%%

Simon's COA is also based on a data structure introduced in Def.~\ref{def:matrix}.

\begin{mydef}
\label{def:matrix}
The \textbf{structure matrix} $A_S$ of a structure $\mathcal S(\mathcal E, \mathcal V)$, with $f_1,\, f_2, ..., f_n \in \mathcal E$ and $x_1,\, x_2, ..., x_m \in \mathcal V$, is a $|\mathcal E| \times |\mathcal V|$ matrix of 1's and 0's in which entry $a_{ij}$ is non-zero if variable $x_j$ appears in equation $f_i$, and zero otherwise.
\end{mydef}

Elementary row operations on the structure matrix may hinder the structure's causal ordering and then are not valid in general \cite{simon1953}. This also emphasizes that the problem of causal ordering is not about solving the system of equations of a structure, but identifying its hidden asymmetries.

%%%%%%%%%%%%%%%%%%% COA %%%%%%%%%%%%%%%%%%%%%%%%
\subsection{Simon's Causal Ordering Algorithm}\label{subsec:coa}
\noindent
Simon has described his Causal Ordering Algorithm (COA) only informally at a high level of abstraction \cite{simon1953}. It is given a complete structure $\mathcal S(\mathcal E, \mathcal V)$ and computes a causal mapping $\varphi$. The causal ordering itself is to be obtained as a post-processing (transitive closure) out of the causal mapping $\varphi$ and its induced set $C_\varphi$ of direct causal dependencies. %---the latter in turn induces the so-called `causal graph'} \cite{simon1953,druzdzel2008}. 
%%%
%The causal mapping returned by Simon's original COA is only `partial,' from partitions on the set of equations to same-cardinality partitions on the set of variables, whenever $\mathcal S$ has variables that are \emph{strongly coupled}. But any total mapping $\varphi$ over $\mathcal S$ must be consistent with a corresponding partial mapping $\varphi^\prime$ \cite{druzdzel2008}. Simon's description takes the partial causal mapping variant in order to enforce the induced causal graph to be acyclic (merge strongly coupled variables into partitions or clusters). 
%%%
Example~\ref{ex:coa} (continued) warms up for Simon's algorithm. 

\setcounter{myex}{0}
\begin{myex} (continued). 
Fig.~\ref{fig:coa-a} shows the matrix of the structure $\mathcal S$ given above in this example. By eliminating the variables identified with the minimal substructures $\mathcal{S}_1, \mathcal{S}_2, \mathcal{S}_3 \subset \mathcal S$, a smaller structure $\mathcal T$ is derived to be input at the next recursive step (see Fig.~\ref{fig:coa-b}). This is the main insight of Simon's to arrive at his recursive causal ordering algorithm, as described next. $\Box$ 
\label{ex:struct-matrix}
\end{myex}

%%%%%%%%%%%%%%%%%%%% FIG STRUCT MATRIX %%%%%%%%%%%%%%
\begin{spacing}{1.2}
\begin{figure}[t]\footnotesize
%\includegraphics[width=\textwidth]{./img/pet3}
%HEADER
%%%%%%%%%%%%%%%%%%%
\tikzset{node style ge/.style={circle,inner sep=0pt,minimum size=10pt}}
%%%%%%%%%%%%%%%%%%%%%%%%%%
\begin{subfigure}{0.29\textwidth}
%PRIMITIVE STRUCT MATRIX
\begin{tikzpicture}[baseline=(A.center)]
% les matrices
\matrix (A) [matrix of math nodes, nodes = {node style ge},column sep=0.6mm] {
%0th line
 & \node (x1) {x_1}; & \node (x2) {x_2}; & \node (x3) {x_3}; & \node (x4) {x_4}; & \node (x5) {x_5}; & \node (x6) {x_6}; & \node (x7) {x_7};\\
%1st line
\node (f1) {f_1}; & \node (a11) {1}; & \node (a12) {0}; & \node (a13) {0}; & \node (a14) {0}; & \node (a15) {0}; & \node (a16) {0}; & \node (a17) {0};\\
%2nd line  
\node (f2) {f_2}; & \node (a21) {0}; & \node (a22) {1}; & \node (a23) {0}; & \node (a24) {0}; & \node (a25) {0}; & \node (a26) {0}; & \node (a27) {0};\\
%3rd line  
\node (f3) {f_3}; & \node (a31) {0}; & \node (a32) {0}; & \node (a33) {1}; & \node (a34) {0}; & \node (a35) {0}; & \node (a36) {0}; & \node (a37) {0};\\
%4th line  
\node (f4) {f_4}; & \node (a41) {1}; & \node (a42) {1}; & \node (a43) {1}; & \node (a44) {1}; & \node (a45) {1}; & \node (a46) {0}; & \node (a47) {0};\\
%5th line  
\node (f5) {f_5}; & \node (a51) {1}; & \node (a52) {0}; & \node (a53) {1}; & \node (a54) {1}; & \node (a55) {1}; & \node (a56) {0}; & \node (a57) {0};\\
%6th line  
\node (f6) {f_6}; & \node (a61) {0}; & \node (a62) {0}; & \node (a63) {0}; & \node (a64) {1}; & \node (a65) {0}; & \node (a66) {1}; & \node (a67) {0};\\
%7th line  
\node (f7) {f_7}; & \node (a71) {0}; & \node (a72) {0}; & \node (a73) {0}; & \node (a74) {0}; & \node (a75) {1}; & \node (a76) {0}; & \node (a77) {1};\\
};
 %\draw [BarreStyle=blue] (A-1-1.north west) node[SignePlus=blue] {$+$} to (A-3-3.south east) ; 
\end{tikzpicture}
\caption{\footnotesize{Structure matrix.}}\label{fig:coa-a}
\end{subfigure}
%%%%%%%%%%%%%%%%%%%
%%%%%%%%%%%%%%%%%%%
\tikzstyle{background0}=[rectangle,
                                                fill=gray!0,
                                                inner sep=0.025cm,
                                                rounded corners=1mm]
\tikzstyle{background1}=[rectangle,
                                                fill=gray!30,
                                                inner sep=0.025cm,
                                                rounded corners=1mm]
\tikzstyle{background2}=[rectangle,
                                                fill=gray!70,
                                                inner sep=0.025cm,
                                                rounded corners=1mm]
%%%%%%%%%%%%%%%%%%%%
\begin{subfigure}{0.33\textwidth}
%\vspace{-5pt}
\hspace{2.5pt}
%PRIMITIVE STRUCT MATRIX
\begin{tikzpicture}[baseline=(A.center)]
  \tikzset{BarreStyle/.style =   {opacity=.35,line width=2.65 mm,line cap=round,color=#1}}
% les matrices
\matrix (A) [matrix of math nodes, nodes = {node style ge},column sep=0.6 mm] {
%0th line
 & \node (x1) {x_1}; & \node (x2) {x_2}; & \node (x3) {x_3}; & \node (x4) {x_4}; & \node (x5) {x_5}; & \node (x6) {x_6}; & \node (x7) {x_7};\\
%1st line
\node (f1) {f_1}; & \node (a11) {1}; & \node (a12) {0}; & \node (a13) {0}; & \node (a14) {0}; & \node (a15) {0}; & \node (a16) {0}; & \node (a17) {0};\\
%2nd line  
\node (f2) {f_2}; & \node (a21) {0}; & \node (a22) {1}; & \node (a23) {0}; & \node (a24) {0}; & \node (a25) {0}; & \node (a26) {0}; & \node (a27) {0};\\
%3rd line  
\node (f3) {f_3}; & \node (a31) {0}; & \node (a32) {0}; & \node (a33) {1}; & \node (a34) {0}; & \node (a35) {0}; & \node (a36) {0}; & \node (a37) {0};\\
%4th line  
\node (f4) {f_4}; & \node (a41) {1}; & \node (a42) {1}; & \node (a43) {1}; & \node (a44) {1}; & \node (a45) {1}; & \node (a46) {0}; & \node (a47) {0};\\
%5th line  
\node (f5) {f_5}; & \node (a51) {1}; & \node (a52) {0}; & \node (a53) {1}; & \node (a54) {1}; & \node (a55) {1}; & \node (a56) {0}; & \node (a57) {0};\\
%6th line  
\node (f6) {f_6}; & \node (a61) {0}; & \node (a62) {0}; & \node (a63) {0}; & \node (a64) {1}; & \node (a65) {0}; & \node (a66) {1}; & \node (a67) {0};\\
%7th line  
\node (f7) {f_7}; & \node (a71) {0}; & \node (a72) {0}; & \node (a73) {0}; & \node (a74) {0}; & \node (a75) {1}; & \node (a76) {0}; & \node (a77) {1};\\
};
 \draw [BarreStyle=blue] (a11.north west) to (a11.south east); 
 \draw [BarreStyle=red] (a22.north west) to (a22.south east);
 \draw [BarreStyle=green] (a33.north west) to (a33.south east);
 \draw [BarreStyle=blue] (a44.north west) to (a55.south east); 
 \draw [BarreStyle=red] (a66.north west) to (a66.south east); 
 \draw [BarreStyle=green] (a77.north west) to (a77.south east); 
     \begin{pgfonlayer}{background}
        \node [background0,
                    fit=(a11) (a12) (a13) (a14) (a15) (a16) (a17) (a21) (a22) (a23) (a24) (a25) (a26) (a27) (a31) (a32) (a33) (a34) (a35) (a36) (a37) (a41) (a42) (a43) (a44) (a45) (a46) (a47) (a51) (a52) (a53) (a54) (a55) (a56) (a57) (a61) (a62) (a63) (a64) (a65) (a66) (a67) (a71) (a72) (a73) (a74) (a75) (a76) (a77) ]
                    {};
        \node [background1,
                    fit=(a44) (a45) (a46) (a47) (a54) (a55) (a56) (a57) (a64) (a65) (a66) (a67) 
                    (a74) (a75) (a76) (a77) ]
                    {};
        \node [background2,
                    fit=(a66) (a67) (a76) (a77) ]
                    %label=left:Entrance:] 
                    {};
    \end{pgfonlayer}
\end{tikzpicture}
\vspace{2pt}
\caption{\footnotesize{RTCM's run in 3 steps.}}% in 3 recursive steps.
\label{fig:coa-b}
\end{subfigure}
%%%%%%%%%%%%%%%%%%%
\begin{subfigure}{0.33\textwidth}
\vspace{10pt}
\begin{center}
\tikzstyle{vertex}=[circle,fill=black!22,minimum size=16pt,inner sep=0pt]
\tikzstyle{selected vertex} = [vertex, fill=red!50]
\tikzstyle{edge} = [draw,thick,->,bend left]
\tikzstyle{weight} = [font=\small]
\tikzstyle{selected edge} = [draw,line width=5pt,-,red!50]
\tikzstyle{ignored edge} = [draw,line width=5pt,-,black!20]
%%%%%%%%%%%%%%%%%%%%%
\begin{tikzpicture}[scale=0.6]
    % Draw a 7,11 network
    % First we draw the vertices
    \foreach \pos/\name in {{(-0.5,2)/x_1}, {(2,2)/x_2}, {(4.5,2)/x_3}, 
                            		 {(0.7,0)/x_4}, {(3.3,0)/x_5},
		 			 {(1,-2)/x_6}, {(3,-2)/x_7}}
        \node[vertex] (\name) at \pos {$\name$};
    % Connect vertices with edges and draw weights
%    \foreach \source/ \dest /\weight in {x_3/x_4/, 
%    							x_1/x_5/,
%							x_4/x_6/, x_5/x_7/}
%        \path[edge] (\source) -> node[weight] {$\weight$} (\dest);
    % draw the arrows
    \draw[->] (x_1) to (x_4);
    \draw[->] (x_1) to (x_5);
    \draw[->] (x_2) to (x_4);
    \draw[->] (x_3) to (x_4);
%    \draw[->] (x_2) to (x_5);
    \draw[->] (x_3) to (x_5);
    \draw[->] (x_4) to[out=10,in=200] (x_5);
    \draw[->] (x_5) to[out=170,in=-20] (x_4);
    \draw[->] (x_4) to (x_6);
    \draw[->] (x_5) to (x_7);
\end{tikzpicture}
\end{center}
\caption{\footnotesize{Causal graph $G_{\varphi}$.}}
\label{fig:causal-graph}
\end{subfigure}
%%%%%%%%%% FINAL CAPtION %%%%%%%%%%%%%%%%%%%%%
\normalsize
\caption{Simon's RTCM, the core procedure in COA. Fig. \ref{fig:coa-a}: a structure matrix given. Fig. \ref{fig:coa-b}: minimal substructures detected in each recursive step $k$ are highlighted in shades of gray and have their diagonal elements colored. Fig. \ref{fig:causal-graph}: Causal graph $G_{\varphi}$ induced by mapping $\varphi$ over structure $\mathcal S$. An edge connects a node $x_i$ towards a node $x_j$, with $x_i, x_j \in \mathcal V$, iff $x_i$ appears in the equation $f \in \mathcal E$ such that $\varphi(f)=x_j$. As the mapping $\varphi$ is not unique, accordingly the causal graph $G_\varphi$ is not either---e.g., consider $\varphi^\prime$ with $f_4 \mapsto x_5$ and $f_5 \mapsto x_4$. The induced graph $G_{\varphi^\prime}$ would have, e.g., a connection from $x_2$ to $x_5$ instead. Yet their graph transitive closure is the same, $tc(G_{\varphi})=tc(G_{\varphi^\prime})$, as we shall see in \S\ref{sec:nayak}. }
\label{fig:coa}
\end{figure}
\end{spacing}
%%%%%%%%%%%%%%%%%%%%%%%%%%%%%%%%%%%%%%%%%%%%%%

%%%%%%%%%%%%%%%%% ALG - COAt  %%%%%%%%%%%%%%%%%
\begin{spacing}{1.1}
\begin{algorithm}[h]\footnotesize
\caption{Simon's Causal Ordering Algorithm based on RTCM.}
\label{alg:coat}
\begin{algorithmic}[1]
\Procedure{\textsf{COA}}{${\mathcal S\!:\, \text{structure over}\; \mathcal E \;\text{and}\; \mathcal V}$}
\Require $\mathcal S$ given is complete, i.e., $|\mathcal E|=|\mathcal V|$
\Ensure Returns $C_\varphi^+$, the causal ordering of $\mathcal S$
\State $\varphi \gets \textsf{RTCM}(\mathcal S)$ \Comment{gets total causal mapping $\varphi$ by Simon's recursive algorithm}
\State $C_\varphi \gets \varnothing$ \Comment{initializes set of direct causal dependencies}
\ForAll{$\langle f, x \rangle \in \varphi$}
\ForAll{$x_a \in Vars(f) \setminus \{x\}$}
\State $C_\varphi \gets C_\varphi \cup \{(x_a, x)\}$
\EndFor
\EndFor
\State \Return $\textsf{TC}(C_\varphi)$ \Comment{returns the transitive closure of $C_\varphi$, as described in \S\ref{subsec:closure}}
\EndProcedure
\end{algorithmic}
%%%%%%%%%%
\hrule
%%%%%%%%%%
\begin{algorithmic}[1]
\Procedure{\textsf{RTCM}}{${\mathcal S\!:\, \text{structure over}\; \mathcal E \;\text{and}\; \mathcal V}$}
\Require Structure $\mathcal S$ given is complete, i.e., $|\mathcal E|=|\mathcal V|$
\Ensure Returns total causal mapping $\varphi: \mathcal E \to \mathcal V$
\State $\varphi \gets \varnothing$, $\mathcal S^\star \gets \varnothing$, $D \gets \varnothing$ \Comment{initializes}
\State identify all minimal substructures $\mathcal S^\prime \subseteq \mathcal S$
\ForAll{minimal $\mathcal S^\prime \subseteq \mathcal S$}\vspace{1pt}
\State $\mathcal S^\star \gets \mathcal S^\star \cup \mathcal S^\prime$ \Comment{aggregates into $\mathcal S^\star$ each minimal substructure scanned}\vspace{1pt}
\ForAll{$f \in \mathcal E^\prime$, where $\mathcal S^\prime$}
\State $x \gets \text{any} \;x_a$ such that $x_a \in Vars(f)$ and $x_a \notin D$\vspace{1pt}
\State $\varphi \gets \varphi \cup \langle f,\, x \rangle$ \Comment{maps to $f$ some variable $x \in Vars(f)$}\vspace{1pt}
\State $D \gets D \cup \{x\}$ \Comment{aggregates into $D$ the variables already `matched'}
\EndFor
\EndFor
\State $\mathcal T \gets \mathcal S \div \mathcal S^\star$ \Comment{removes $\mathcal E^\star$;  eliminates $\mathcal V^\star = \bigcup_{f \in \mathcal E^\star} Vars(f)$, where n.b., $\mathcal V^\star=D$ }\vspace{1pt}
\If{$\mathcal T \neq \varnothing$}
\State \Return $\varphi \;\cup\;$$\textsf{RTCM}$$(\mathcal T)$
%\Else
\EndIf
\vspace{-3pt}
\State \Return $\varphi$
\EndProcedure
\end{algorithmic}
\end{algorithm}
\end{spacing}
%%%%%%%%%%%%%%%%%%%%%%%%%%%%%%%%%%%%%%%%%%%%%%

Algorithm~\ref{alg:coat} describes the variant of Simon's original description that returns a `total' causal mapping (satisfies Def.~\ref{def:tcm}).\footnote{This slight variation takes place in lines 7--10 of RTCM in Algorithm \ref{alg:coat}, and is irrelevant to its intractability---which we shall see is due to line 3. Besides, `total' and `partial' causal mappings are interchangeable  \cite{druzdzel2008}. In particular, recovering the latter from the former is straightforward: just merge `strongly coupled' variables in a cluster. Intuitively, these are variables whose values can only be determined simultaneously. To be precise, let $x_1, x_2 \in \mathcal V$ be variables in a structure $\mathcal S(\mathcal E, \mathcal V)$. We say $x_1, x_2$ are \emph{strongly coupled} if $\mathcal S$ is minimal. } 
%%%
We refer to its core procedure as RTCM (recursive total causal mapping). It comprises the actual source of intractability in Simon's original description, while lines 3-7 of the COA procedure were not described by himself but only considered as matter of a post-processing. We illustrate RTCM through Example \ref{ex:coa} and Fig. \ref{fig:coa}.

\setcounter{myex}{0}
\begin{myex} (continued). Let $\mathcal T = \mathcal S \div (\mathcal S_1 \cup \mathcal S_2 \cup \mathcal S_3)$ be the structure returned by COA's first recursive step $k=0\,$ for this example. $\!$Then a valid total causal mapping that can be returned at $k=1$ (see Fig.\ref{fig:coa-b}) is COA$(\mathcal T) = \{ \langle f_4, x_4 \rangle,\, \langle f_5, x_5 \rangle\}$.  %,\,$ $\langle f_6, x_6\rangle,\, \langle f_7, x_7\rangle\}$. $\!$
Since $x_4$ and $x_5$ are strongly coupled, COA maps them arbitrarily (e.g., it could be $f_4 \mapsto x_5,\, f_5 \mapsto x_4$ instead). Such total causal mapping $\varphi$ renders a cycle in the directed causal graph $G_{\varphi}$ induced by $\varphi$ (see Fig.\ref{fig:causal-graph}), which might not be desirable for some applications. 
$\Box$
\label{ex:coa}
\end{myex}

%%%%%%%%%%%%%%%%%%%%%%%%%%%%%%%%%%%%%%%%%%%%%%
\subsection{Hardness of Simon's Recursive COA}\label{subsec:hard}

\noindent
First of all, we state a decision problem associated with finding the minimal structures in a given structure (line 3 of Simon's RTCM procedure in Algorithm~\ref{alg:coat}). For short, we shall refer to this problem as the Complete Substructure Decision Problem (CSDP). 

\begin{framed}
\noindent
(CSDP). Given a complete structure $\mathcal S(\mathcal E, \mathcal V)$ with $|\mathcal E|=|\mathcal V|=m$ and an integer $1 \leq \ell < m$, does $\mathcal S$ have a complete substructure $\mathcal S^\prime(\mathcal E^\prime, \mathcal V^\prime)$ with $|\mathcal E^\prime|=|\mathcal V^\prime|=\ell$? 
\end{framed}

In this section we carry out an original study on CSDP and show that it is NP-Complete. We consider a basic observation by Nayak \cite{nayak1994} apud. \cite{serrano1987}, that there is a correspondence between Simon's structures and bipartite graphs. A graph is said \emph{bipartite} if its vertices can be divided into two disjoint sets $V_1$ and $V_2$ and every edge connects a vertex in $V_1$ to one in $V_2$ \cite{bondy1976}. Moreover it is said to be $\ell$-\emph{balanced} if $|V_1|=|V_2|=\ell$, and is said to be \emph{connected} if $deg(w) \geq 1$ for all $w \in V_1 \cup V_2$. % (every vertex connects with some other vertex). 
Def.~\ref{def:bipartite} introduces the mentioned correspondence and provides us some shorthand notation. 

\begin{mydef}\label{def:bipartite}
Let $\mathcal S(\mathcal E, \mathcal V)$ be a structure, and $G=(V_1 \cup V_2, E)$ be a bipartite graph where $V_1 \mapsto \mathcal E$ and $V_2 \mapsto \mathcal V$, and $E \mapsto \mathcal S$ so that an edge $(f, x) \in E$ if and only if we have $x \in Vars(f)$. We say that $G$ is the bipartite graph that \textbf{corresponds to} structure $\mathcal S$, and for short write $G \sim \mathcal S$.
\end{mydef}

Fig. \ref{fig:bipartite} shows the bipartite graph $G \sim \mathcal S$, where $\mathcal S$ is the initial structure given in Example \ref{ex:struct-matrix}. 

Def.~\ref{def:structural} introduces a bipartite graph property of our interest, and then Lemma~\ref{lemma:bipartite} originally establishes an equivalence of two problems: searching for complete substructures $\mathcal S^\prime \subset \mathcal S$ and searching for specific bipartite subgraphs $G^\prime \subset G$. 
 
%%%
\begin{mydef}\label{def:structural}
Let $G(V_1 \cup V_2, E)$ be a bipartite graph. We say that $G$ is \textbf{structural} if, for every $V_1^\prime \subseteq V_1$, there is a connected bipartite subgraph $G^\prime(V_1^\prime \cup V_2^\prime, E^\prime)$ with $|V_1^\prime| \leq |V_2^\prime|$. (Note resemblance with Def.~\ref{def:structure}).
\end{mydef}
%%%
\begin{comment}
\begin{mylemma}\label{lemma:structural}
Let $G(V_1 \cup V_2, E)$ be a bipartite graph. Then a structure $\mathcal S$ such that $G \sim \mathcal S$ exists and is unique iff $G$ is structural. %Moreover, $\mathcal S$ is unique iff for every $f \in V_1^\prime$, there is an edge $(f, x) \in E$ only if $x \in V_2^\prime$.
\end{mylemma}
\begin{proof}
Straightforward, as the structural property (Def.~\ref{def:structural}) is an obvious rephrasing of Def.~\ref{def:structure} (structure) in terms of a bipartite graph. \hl{?????}
$\Box$
\end{proof}
\end{comment}

%%%%%%%%%%%%%%%%%%%% FIG GRAPH %%%%%%%%%%%%%%
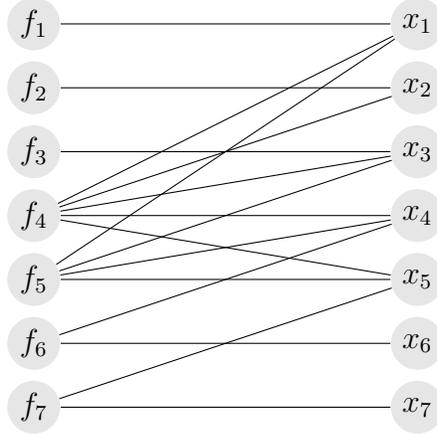
\begin{figure}[t]
\begin{center}
\tikzstyle{rect}=[rectangle,
                                    thick,
                                    minimum size=0.3cm,
                                    draw=black]
\tikzstyle{circ}=[circle,
                                    thick,
                                    minimum size=0.3cm,
                                    draw=black]
\tikzstyle{vertex}=[circle,fill=black!10,minimum size=20pt,inner sep=0pt]
\tikzstyle{selected vertex} = [vertex, fill=red!24]
\tikzstyle{edge} = [draw,thick,->,bend left]
\tikzstyle{weight} = [font=\small]
\tikzstyle{selected edge} = [draw,line width=5pt,-,red!50]
\tikzstyle{ignored edge} = [draw,line width=5pt,-,black!20]
%%%%%%%%%%%%%%%%%%%%%
\begin{tikzpicture}[scale=0.85]
    % Draw a 7,11 network
    % First we draw the vertices
    \foreach \pos/\name in {{(0,10)/f_1}, {(6,10)/x_1}, {(0,9)/f_2}, {(6,9)/x_2},
					{(0,8)/f_3}, {(6,8)/x_3}, {(0,7)/f_4}, {(6,7)/x_4},  
					{(0,6)/f_5}, {(6,6)/x_5}, {(0,5)/f_6}, {(6,5)/x_6}, 
		 			 {(0,4)/f_7}, {(6,4)/x_7}}
        \node[vertex] (\name) at \pos {$\name$};
    % Connect vertices with edges and draw weights
%    \foreach \source/ \dest /\weight in {x_3/x_4/, 
%    							x_1/x_5/,
%							x_4/x_6/, x_5/x_7/}
%        \path[edge] (\source) -> node[weight] {$\weight$} (\dest);
    % draw the arrows
    \draw[-] (f_1) to (x_1);
    \draw[-] (f_2) to (x_2);
    \draw[-] (f_3) to (x_3);
    \draw[-] (f_4) to (x_1);
    \draw[-] (f_4) to (x_2);
    \draw[-] (f_4) to (x_3);
    \draw[-] (f_4) to (x_4);
    \draw[-] (f_4) to (x_5);
    \draw[-] (f_5) to (x_1);
    \draw[-] (f_5) to (x_3);
    \draw[-] (f_5) to (x_4);
    \draw[-] (f_5) to (x_5);
    \draw[-] (f_6) to (x_4);
    \draw[-] (f_6) to (x_6);
    \draw[-] (f_7) to (x_5);
    \draw[-] (f_7) to (x_7);
%    \draw[->] (x_4) to[out=10,in=200] (x_5);
\end{tikzpicture}
\end{center}
\vspace{-10pt}
\caption{Bipartite graph $G$ of structure $\mathcal S$ from Example \ref{ex:struct-matrix}.}\label{fig:bipartite}
\end{figure}
\vspace{0pt}
%%%%%%%%%%%%%%%%%%%%%%%%%%%%%%%%%%%%%%%%%%%%%%

%%%
\begin{mylemma}\label{lemma:bipartite}
Let $\mathcal S(\mathcal V, \mathcal E)$ be a complete structure with $|\mathcal E|=|\mathcal V|=m$ and $1 \leq \ell < m$ provide an instance of CSDP. Let also $G(V_1 \cup V_2, E)$ be a bipartite graph $G \sim \mathcal S$. Then $\mathcal S$ has a substructure $\mathcal S^\prime$ that gives a yes answer to CSDP if and only if $G$ has a bipartite subgraph $G^\prime(V_1^\prime \cup V_2^\prime, E^\prime)$ such that $G^\prime \sim \mathcal S^\prime$ and all of these conditions hold: 
\begin{itemize}
\item[(i)]  Bipartite subgraph $G^\prime$ is structural; 
%%%
\item[(ii)] For every $f \in V_1^\prime$, there is an edge $(f, x) \in E$ only if $x \in V_2^\prime$; %, where  with $E \supset E^\prime$;
%%%
%\footnote{That is, supergraph $G$ can have no edge $(f, x) \in E$ where $f \in V_1^\prime$ but $x \in V_2 \setminus V_2^\prime$.}
%%%
\item[(iii)]  Bipartite subgraph $G^\prime$ is $\ell$-balanced, that is, $|V_1^\prime| = |V_2^\prime|=\ell$;
%%%
\end{itemize}
\end{mylemma}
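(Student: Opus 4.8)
The plan is to prove the biconditional by establishing a precise dictionary between the structural notions of Def.~\ref{def:structure}--\ref{def:complete} and the graph conditions (i)--(iii), mediated throughout by the correspondence $G \sim \mathcal S$ of Def.~\ref{def:bipartite}. The governing observation is that, once a set of equations $\mathcal E^\prime$ is fixed, the substructure it induces is completely determined: its variable set must be $\mathcal V^\prime = \bigcup_{f \in \mathcal E^\prime} Vars(f)$. On the graph side, fixing $V_1^\prime \subseteq V_1$ fixes the neighbourhood $N(V_1^\prime) = \{\, x : (f,x) \in E,\ f \in V_1^\prime \,\}$, and condition (ii) is exactly the requirement that $V_2^\prime$ contain this neighbourhood; combined with $G^\prime \sim \mathcal S^\prime$ (which forces $V_2^\prime \mapsto \mathcal V^\prime$), it pins down $V_2^\prime$ as the image of $\bigcup_{f \in \mathcal E^\prime} Vars(f)$. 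Thus each condition has a clear counterpart: (ii) encodes that $\mathcal S^\prime$ is a genuine variable-closed substructure, (iii) encodes completeness together with the target size $\ell$, and (i) encodes the structural property, i.e.\ Def.~\ref{def:structure}(a).

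For the forward direction I would assume $\mathcal S^\prime(\mathcal E^\prime, \mathcal V^\prime)$ is a complete substructure witnessing a yes-answer, so $|\mathcal E^\prime| = |\mathcal V^\prime| = \ell$, and take $G^\prime$ to be the bipartite graph corresponding to $\mathcal S^\prime$, which exists and satisfies $G^\prime \sim \mathcal S^\prime$ by construction. Condition (iii) is then immediate from $|\mathcal E^\prime| = |\mathcal V^\prime| = \ell$ under the identifications $V_1^\prime \mapsto \mathcal E^\prime$ and $V_2^\prime \mapsto \mathcal V^\prime$. Condition (ii) follows because $\mathcal V^\prime = \bigcup_{f \in \mathcal E^\prime} Vars(f)$, so any edge $(f,x) \in E$ with $f \in V_1^\prime$ has $x \in Vars(f) \subseteq \mathcal V^\prime$, i.e.\ $x \in V_2^\prime$. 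Condition (i) follows since $\mathcal S^\prime$, being a substructure of the structure $\mathcal S$, inherits Def.~\ref{def:structure}(a): every subset of $k>0$ of its equations involves at least $k$ variables, which is precisely the statement that $G^\prime$ is structural in the sense of Def.~\ref{def:structural}.

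For the converse I would start from a bipartite subgraph $G^\prime(V_1^\prime \cup V_2^\prime, E^\prime)$ with $G^\prime \sim \mathcal S^\prime$ satisfying (i)--(iii), and read off $\mathcal S^\prime(\mathcal E^\prime, \mathcal V^\prime)$ via the correspondence. Since $G^\prime \subseteq G$ we have $V_1^\prime \subseteq V_1$, hence $\mathcal E^\prime \subseteq \mathcal E$, so $\mathcal S^\prime$ is a substructure of $\mathcal S$; because every subset of $\mathcal E^\prime$ is also a subset of $\mathcal E$, it inherits both clauses of Def.~\ref{def:structure} and is thus itself a structure. Condition (ii) guarantees $\bigcup_{f \in \mathcal E^\prime} Vars(f) \subseteq V_2^\prime$, while $G^\prime \sim \mathcal S^\prime$ forces $V_2^\prime = \mathcal V^\prime = \bigcup_{f \in \mathcal E^\prime} Vars(f)$; together these show no variable of $\mathcal E^\prime$ is omitted, so $\mathcal S^\prime$ is the variable-closed substructure on $\mathcal E^\prime$ rather than an artefact of variable elimination. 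Finally condition (iii) gives $|\mathcal E^\prime| = |V_1^\prime| = \ell = |V_2^\prime| = |\mathcal V^\prime|$, so $\mathcal S^\prime$ is complete (Def.~\ref{def:complete}) of size $\ell$ and hence a yes-witness for CSDP.

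The step I expect to be the main obstacle is the backward direction's bookkeeping around $V_2^\prime$: one must argue carefully that conditions (i)--(iii) together with $G^\prime \sim \mathcal S^\prime$ rule out the degenerate situations where $V_2^\prime$ either fails to contain all neighbours of $V_1^\prime$ (so that $\mathcal S^\prime$ would only be a variable-eliminated structure $\mathcal S \div (\cdot)$ rather than an honest substructure) or carries spurious variables not appearing in $\mathcal E^\prime$ (which would break $\ell$-balance or the $G^\prime \sim \mathcal S^\prime$ correspondence). Pinning down that the correspondence $\sim$ forces $V_2^\prime$ to equal exactly $\bigcup_{f \in \mathcal E^\prime} Vars(f)$, and reconciling this with the independently stated condition (ii), is where the care is needed; the structural clause (i) and the counting in (iii) are then routine translations of Def.~\ref{def:structure}(a) and Def.~\ref{def:complete}.
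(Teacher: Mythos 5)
Your proposal is correct and takes essentially the same approach as the paper's proof: the same condition-by-condition dictionary identifying (ii) with variable-closedness of $\mathcal E^\prime$, (iii) with completeness at size $\ell$, and (i) with Def.~\ref{def:structure}(a), and the sufficiency direction is argued identically. The only cosmetic difference is that you prove necessity directly by constructing $G^\prime$ from a yes-witness, whereas the paper argues that direction by contraposition, case-splitting on which of (i)--(iii) fails.
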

\begin{proof}
We establish conditions (i-iii) as the bipartite subgraph properties that correspond to a yes answer to CSDP. See \ref{app:coa-bipartite}.  
$\Box$
\end{proof}

We now reach the key property in our argument to show COA's hardness. A \emph{biclique} (or complete bipartite graph) is a bipartite graph $G=(V_1 \cup V_2,\, E)$ such that for every two vertices $u \in V_1$, $v \in V_2$, we have $(u,\, v) \in E$ \cite{even2011}. %Note that for \emph{balanced} bicliques, i.e., when $|V_1|=|V_2|=K$, the degree $deg(u)$ of any vertex $u \in V_1 \cup V_2$ must be $deg(u)=|V_1|=|V_2|=K$.\footnote{\hl{The degree of a vertex $u \in G$ is the number of edges incident with it. In the bipartite graph $G$ shown in Fig.}~\ref{fig:bipartite}, \hl{e.g., we have $deg(f_5)=4$ and $deg(f_7)=2$.}} The total number of edges in a balanced biclique is then $|E|=K^2$.
Thus the number of edges in a biclique is $|E| = |V_1| \cdot |V_2|$. A biclique with partitions of size $|V_1|=m\,$ and $|V_2|=n$ is denoted $K_{m, n}$. For instance, the bipartite graph $G$ shown in Fig.~\ref{fig:bipartite} has a %biclique $G^\prime \subset G$ of form 
$K_{2, 2}$ biclique, viz., $G^\prime(V_1^\prime \cup V_2^\prime, E^\prime)$, where $V_1^\prime = \{f_4, f_5\}$, $V_2^\prime = \{x_4, x_5\}$ and $E^\prime \!=\! \{(f_4, x_4), (f_4, x_5), (f_5, x_4), (f_5, x_5)\}$. 
%%%
%Recent approaches to co-clustering problems (e.g., \cite{uno2010}) have been working with the notion of \emph{pseudo-biclique} (also called `quasi-biclique'), which is a relaxation (extension) of the biclique concept for less than complete connectivity, say when some edges are missing so that the ratio of the number of edges $|E|$ in comparison to the biclique ($|E|=K^2$) is no less than a threshold. 
%
Let us now consider Example~\ref{ex:hardness}.

%%%%%%%%%%%%%%%%%%%% FIG HARDNESS %%%%%%%%%%%%%%
\begin{spacing}{1.2}
\begin{figure}[t]%\scriptsize
%%%%%%%%%%%%%%%%%%%
\tikzset{node style ge/.style={circle,inner sep=0pt,minimum size=16pt}}
\tikzstyle{background0}=[rectangle,
                                                fill=gray!17,
                                                inner sep=0.025cm,
                                                rounded corners=1mm]
\tikzstyle{background1}=[rectangle,
                                                fill=gray!65,
                                                inner sep=0.025cm,
                                                rounded corners=1mm]
%%%%%%%%%%%%%%%%%%%%
\vspace{2pt}
%\hspace{-4pt}
\begin{subfigure}{0.33\textwidth}
\hspace{8pt}
\begin{tikzpicture}[baseline=(A.center)]
  \tikzset{BarreStyle/.style =   {opacity=.35,line width=3.25 mm,line cap=round,color=#1}}
% les matrices
\matrix (A) [matrix of math nodes, nodes = {node style ge},column sep=1.0 mm] {
%0th line
 & \node (x1) {x_1}; & \node (x2) {x_2}; & \node (x3) {x_3}; & \node (x4) {x_4};\\
%1st line
\node (f1) {f_1}; & \node (a11) {1}; & \node (a12) {0}; & \node (a13) {1}; & \node (a14) {0};\\
%2nd line  
\node (f2) {f_2}; & \node (a21) {1}; & \node (a22) {1}; & \node (a23) {0}; & \node (a24) {0};\\
%3rd line  
\node (f3) {f_3}; & \node (a31) {0}; & \node (a32) {1}; & \node (a33) {1}; & \node (a34) {0};\\
%4th line  
\node (f4) {f_4}; & \node (a41) {1}; & \node (a42) {1}; & \node (a43) {1}; & \node (a44) {1};\\
};
     \begin{pgfonlayer}{background}
        \node [background0,
                    fit=(a11) (a12) (a13) (a21) (a22) (a23) (a31) (a32) (a33) ]
                    {};
        \node [background1,
                    fit=(a44) ]
                    {};
    \end{pgfonlayer}
\end{tikzpicture}
\vspace{2pt}
\caption{\textsf{COA} $\!$(2$\!$ recursive steps).}\label{fig:hard-matrix}
\end{subfigure}
%%%%%%%%%%%%%%%%%%%%%%%%%
%\vspace{2pt}
\hspace{-5pt}
%%%%%%%%%%%%%%%%%%%%%%%%%%
\begin{subfigure}{0.32\textwidth}
\tikzstyle{circ}=[circle,
                                    thick,
                                    minimum size=0.2cm,
                                    draw=black]
\tikzstyle{vertex}=[circle,fill=black!10,minimum size=18pt,inner sep=0pt]%minimum size=18pt
\tikzstyle{svertex} = [vertex, fill=black!30]
\tikzstyle{edge} = [draw,thick,->,bend left]
\tikzstyle{weight} = [font=\small]
\tikzstyle{selected edge} = [draw,line width=5pt,-,red!50]
\tikzstyle{ignored edge} = [draw,line width=5pt,-,black!20]
%%%%%%%%%%%%%%%%%%%%%
\begin{tikzpicture}[scale=0.85]
%\hspace{10pt}
    % Draw a 7,11 network
    % First we draw the vertices
    \foreach \pos/\name in {{(-0.5,10)/f_1}, {(3.0,10)/x_1}, {(-0.5,9)/f_2}, {(3.0,9)/x_2},
					{(-0.5,8)/f_3}, {(3.0,8)/x_3}} 
        \node[vertex] (\name) at \pos {$\name$};
\node[svertex] (f_4) at (-0.5,7) {$f_4$};
\node[svertex] (x_4) at (3.0,7) {$x_4$};

    % draw the arrows
    \draw[-] (f_1) to (x_1);
    \draw[-] (f_1) to (x_3);
    \draw[-] (f_2) to (x_1);
    \draw[-] (f_2) to (x_2);
    \draw[-] (f_3) to (x_2);
    \draw[-] (f_3) to (x_3);
    \draw[-] (f_4) to (x_1);
    \draw[-] (f_4) to (x_2);
    \draw[-] (f_4) to (x_3);
    \draw[-] (f_4) to (x_4);
\end{tikzpicture}
\vspace{2pt}
\caption{Bipartite graph $G$.}\label{fig:hard-graph}
\end{subfigure}
%%%%%%%%%%%%%%%%%%%
\hspace{-7pt}
%%%%%%%%%%%%%%%%%%%%%%%%%%
\begin{subfigure}{0.35\textwidth}
\tikzstyle{circ}=[circle,
                                    thick,
                                    minimum size=0.2cm,
                                    draw=black]
\tikzstyle{vertex}=[circle,fill=black!10,minimum size=18pt,inner sep=0pt]%minimum size=18pt
\tikzstyle{svertex} = [vertex, fill=black!30]
\tikzstyle{edge} = [draw,thick,->,bend left]
\tikzstyle{weight} = [font=\small]
\tikzstyle{selected edge} = [draw,line width=5pt,-,red!50]
\tikzstyle{ignored edge} = [draw,line width=5pt,-,black!20]
%%%%%%%%%%%%%%%%%%%%%
\begin{tikzpicture}[scale=0.85]
%\hspace{10pt}
    % Draw a 7,11 network
    % First we draw the vertices
    \foreach \pos/\name in {{(3.0,10)/x_1}, {(3.0,9)/x_2},
					{(3.0,8)/x_3}} 
        \node[vertex] (\name) at \pos {$\name$};
\node[svertex] (f_1) at (-0.5,10) {$f_1$};
\node[svertex] (f_2) at (-0.5,9) {$f_2$};
\node[svertex] (f_3) at (-0.5,8) {$f_3$};
\node[vertex] (f_4) at (-0.5,7) {$f_4$};
\node[svertex] (x_4) at (3.0,7) {$x_4$};

    % Connect vertices with edges and draw weights
%    \foreach \source/ \dest /\weight in {x_3/x_4/, 
%    							x_1/x_5/,
%							x_4/x_6/, x_5/x_7/}
%        \path[edge] (\source) -> node[weight] {$\weight$} (\dest);
    % draw the arrows
    \draw[-] (f_1) to (x_2);
    \draw[-] (f_1) to (x_4);
    \draw[-] (f_2) to (x_3);
    \draw[-] (f_2) to (x_4);
    \draw[-] (f_3) to (x_1);
    \draw[-] (f_3) to (x_4);
\end{tikzpicture}
\vspace{2pt}
\caption{Bipartite complement $G^c\!$.}\label{fig:hard-cgraph}
\end{subfigure}
%%%%%%%%%%%%%%%%%%%
\caption{Another example of structure $\mathcal S$ with its correspondent bipartite graph $G \sim \mathcal S$.}
\label{fig:hardness}
\end{figure}
\end{spacing}
%%%%%%%%%%%%%%%%%%%%%%%%%%%%%%%%%%%%%%%%%%%%%%

%%%%%%%%
\begin{myex}\label{ex:hardness}
We introduce another structure $\mathcal S$, whose structure matrix is shown in Fig.~\ref{fig:hard-matrix} together with the bipartite graph $G \sim \mathcal S$ in Fig.~\ref{fig:hard-graph}. 
Let us consider subgraph $G^\prime(V_1^\prime \cup V_2^\prime, E^\prime)$ in $G$ that has $V_1^\prime=\{f_1, f_2, f_3\}$ and $V_2^\prime=\{x_1, x_2, x_3\}$. %, that is, $|V_1^\prime|\!=\!|V_2^\prime|\!=\!K\!=\!3$. 
Observe that we have $G^\prime \sim \mathcal S^\prime$, where $\mathcal S^\prime \subset \mathcal S$ is the complete substructure represented by the shaded $3 \times 3$ matrix in Fig.~\ref{fig:hard-matrix}. 

%%%%
Note also %in Figs.~\ref{fig:hard-matrix}-\ref{fig:hard-graph} 
that such bipartite subgraph $G^\prime$ %with $G^\prime \sim \mathcal S^\prime$ 
satisfies the conditions (i-iii) of Lemma~\ref{lemma:bipartite} and  in fact $\mathcal S^\prime$ is a complete substructure in $\mathcal S$. 
%%%
Clearly, $G^\prime \sim \mathcal S^\prime$ is not a biclique, as it is not the case that $deg(w) = 3$ for all $w \in V_1^\prime \cup V_2^\prime$. So there is no obvious connection between identifying complete substructures in a structure and bicliques in a bipartite graph. 
%%%
$\Box$
\end{myex}
\vspace{4pt}

The key insight to COA's hardness comes as follows---consider Example~\ref{ex:hardness} and Fig.~\ref{fig:hardness} for illustration. Recall from Lemma~\ref{lemma:bipartite}(ii) that, if we had an edge, say, connecting $(f_1, x_4) \in E$, then by Def.~\ref{def:structure} the substructure $\mathcal S^\prime(\mathcal E^\prime, \mathcal V^\prime)$ with $\mathcal E^\prime = \{f_1, f_2, f_3\}$ would have $\mathcal V^\prime = \bigcup_{f \in \mathcal E^\prime} Vars(f) = \{x_1, x_2, x_3, x_4\}$ instead. That is, it would no more be a complete substructure. 
%%%
In fact, verifying such a negative property (Lemma~\ref{lemma:bipartite}.ii) in structural bipartite graphs translates onto a neat positive property (biclique) in the bipartite complement of bipartite graph $G$. %, which turns out to be ``hard.'' 
%suppose we could guess . Still, to verify such a negative property requires checking for each $f \in V_1^\prime$, whether there is some $x \in V_2 \setminus V_2^\prime$ such that $(f, x) \in E$. 
%%%
%%%%%%%%%% COMMENTED %%%%%%%%%%%%%%
\begin{comment}
\begin{mydef}\label{def:cgraph}
Let $G(V_1 \cup V_2, E)$ be a bipartite graph, and $G^\star(V_1^\star \cup V_2^\star, E^\star)$ be defined such that $V_1^\star = V_1$ and $V_2^\star = V_2$, and an edge $(f, x) \in E^\star$ if and only if $(f, x) \notin E$ but $f \in V_1^\star$, $x \in V_1^\star$.\footnote{This definition differs from the one of complement graph subtly, as we require every connecting nodes to be in the disjoint sets $V_1^\star$ and $V_1^\star$.}
\end{mydef}
\end{comment}
%%%%%%%%%% COMMENTED %%%%%%%%%%%%%%

The \emph{bipartite complement} of a bipartite graph $G(V_1 \cup V_2 , E)$ is a bipartite graph $G^c(V_1 \cup V_2, E^c)$ where an edge $(u, v) \in E^c$ iff $(u, v) \notin E$ for every $u \in V_1$ and $v \in V_2$. Given a bipartite graph $G(V_1 \cup V_2, E)$, it is easy to see that we can render $G^c(V_1 \cup V_2, E^c)$ in polynomial time---consider, e.g., the biadjacency matrix of $G$ (viz., the structure matrix in Fig.~\ref{fig:hard-matrix}), and run a full scan on it to switch the boolean value of each entry in time $O(|V_1|\cdot |V_2|)$. Moreover, this operation is clearly invertible, as there is a one-to-one correspondence between $G$ and $G^c$. 

Fig.~\ref{fig:hard-cgraph} shows the bipartite complement graph $G^c$ of the bipartite graph $G$ from Fig.~\ref{fig:hard-graph}. 
Note that $G^c$ has a biclique $K_{3, 1}$ with its vertices shaded in dark grey. 
To emphasize the point, if we had an edge $(f_1, x_4) \in E$ (see Fig.~\ref{fig:hard-graph}), then such a biclique $K_{3, 1}$ would not exist in $G^c$ (see Fig.~\ref{fig:hard-cgraph}). We would have a $K_{2,\, 1}$ biclique instead with all edges in $\{f_2, f_3\} \times \{x_4\}$, but note that $2+1=3$ does not sum up to $|V_1|=|V_2|=m=4$. %(this necessary condition will become clearer in Theorem~\ref{thm:biclique}). 

We can now establish the result we seek. We introduce below the Exact Node Cardinality Decision Problem (ENCD), which is a variant of biclique problem in bipartite graphs that is known to be NP-Complete \cite[p.~393]{dawande2001}. Theorem~\ref{thm:biclique} establishes its connection with CSDP. 

\begin{framed}
\noindent
$\!$(ENCD). Given a bipartite graph $G=(V_1 \cup V_2,\, E)$ and two positive integers $a, b$, does $G$ have a biclique $K_{a,\, b}$? 
\end{framed}
%%%%%%%%%%%%%%%%%%%%

%%%%%%%%%%%%%%%%%%%%%
\begin{mythm}\label{thm:biclique}
CSDP is NP-Complete. 
\end{mythm}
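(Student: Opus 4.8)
The plan is to discharge the two obligations of NP-completeness separately: membership in NP, and NP-hardness. Membership is the easy direction. First observe that a substructure is completely determined by its equation set, since by Def.~\ref{def:structure} the variable set is forced to be $\mathcal V' = \bigcup_{f \in \mathcal E'} Vars(f)$, and that any subset of equations of a structure is again a structure (condition (a) of Def.~\ref{def:structure} is inherited from the superset). Hence I would take the candidate equation set $\mathcal E' \subseteq \mathcal E$ as the certificate and verify in time $O(|\mathcal S|)$ that $|\mathcal E'| = \ell$ and that $|\bigcup_{f \in \mathcal E'} Vars(f)| = \ell$, with $1 \le \ell < m$. This places CSDP in NP.

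For hardness I would reduce from ENCD. The guiding correspondence is the one already prepared by Lemma~\ref{lemma:bipartite} together with the bipartite-complement observation: reading condition (ii) of Lemma~\ref{lemma:bipartite} in the complement $G^c$, a set $V_1'$ of equations has all its neighbours inside its own variable set exactly when $V_1'$ is completely joined in $G^c$ to the remaining variables $V_2 \setminus V_2'$. Because $G$ is structural, Hall's condition forces $|N_G(V_1')| \ge |V_1'|$, so the common $G^c$-neighbourhood of any $\ell$-set $V_1'$ can have size at most $m - \ell$; consequently a complete substructure of size $\ell$ exists in $\mathcal S$ if and only if $G^c$ contains a biclique $K_{\ell,\,m-\ell}$. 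This subsumes conditions (i) and (iii) of Lemma~\ref{lemma:bipartite} automatically, reducing CSDP to the detection of a complement-balanced biclique in $G^c$.

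The reduction itself then starts from an ENCD instance $(H, a, b)$, with $H = (W_1 \cup W_2, F)$, and builds a complete, \emph{structural} bipartite graph $G$ on equal sides $|V_1| = |V_2| = m$ together with a threshold $\ell$, so that $G^c$ has a biclique $K_{\ell,\,m-\ell}$ if and only if $H$ has $K_{a,b}$. I would place $H$ inside $G^c$ (so that $G^c$ restricted to $W_1 \times W_2$ equals $H$) and pad both sides with gadget vertices that are forced onto the two parts of every $(\ell, m-\ell)$-biclique, arranging the gadget so that the only contribution the original graph can make is precisely an $H$-biclique $K_{a,b}$. Tuning the number of gadget equation- and variable-vertices fixes $\ell$ and $m$ as functions of $a$, $b$ and the sizes of $H$ (taking the gadget large enough to absorb the dimension mismatch when $a+b$ is smaller than $|W_1|$ or $|W_2|$), and the whole construction is clearly polynomial and invertible, matching the polynomial-time computability of the complement noted earlier.

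The main obstacle is the gadget design, which must simultaneously meet three competing requirements. First, $G$ must genuinely be a structure, i.e.\ satisfy Def.~\ref{def:structure}/Def.~\ref{def:structural}; in complement terms this means $G^c$ must contain \emph{no} biclique $K_{s,t}$ with $s + t > m$, while still admitting the wanted $K_{\ell,\,m-\ell}$ sitting exactly at the boundary $\ell + (m-\ell) = m$. Second, the gadget must not create any spurious complete substructure, i.e.\ no biclique $K_{\ell,\,m-\ell}$ in $G^c$ other than those arising from an $H$-biclique $K_{a,b}$. Third, since every equation and every variable of a structure has degree at least one (condition (a) of Def.~\ref{def:structure} with $k=1$), no gadget vertex may be left isolated in $G$, equivalently $G^c$-universal, which is what forces the gadget to carry a few private matching edges rather than being bluntly universal. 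Once a gadget meeting these constraints is exhibited and the correspondence of the second paragraph is invoked, correctness of the reduction follows, and together with membership in NP this proves CSDP NP-complete.
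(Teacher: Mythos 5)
Your overall route is the same as the paper's: use the correspondence of Lemma~\ref{lemma:bipartite} and the bipartite complement to translate ``complete substructure of size $\ell$'' into ``biclique $K_{\ell,\,m-\ell}$ in $G^c$,'' and then lean on the NP-completeness of ENCD. Your NP-membership paragraph is fine (and is a point the paper's own proof passes over in silence). The derivation that a complete substructure of size $\ell$ exists iff $G^c$ contains $K_{\ell,\,m-\ell}$ is also correct and matches the ``decision problem correspondence'' step of \ref{app:coa-npcomplete}.

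The genuine gap is in the hardness half: you never exhibit the gadget, and the gadget is where all the work lives. You correctly observe that an arbitrary ENCD instance $(H,a,b)$ cannot be fed directly into the correspondence --- the target graph must be structural and balanced, the wanted biclique must sit exactly at the boundary $\ell+(m-\ell)=m$, no spurious boundary bicliques may be created, and no padding vertex may become $G^c$-universal --- and you list these as ``three competing requirements.'' But listing the requirements is not the same as meeting them; as written, the reduction is a promissory note, and it is not obvious that a padding scheme satisfying all three constraints simultaneously exists (in particular, forcing every $(\ell,m-\ell)$-biclique of $G^c$ to pass through the gadget while keeping $G$ structural is exactly the kind of step that can fail). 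For what it is worth, the paper does not build this gadget either: it instead \emph{restricts} the ENCD instances at the source --- it assumes the given bipartite graph already has $|V_1|=|V_2|=m$, already satisfies the complement-structural property, and already has $b=m-a$ --- and asserts that these restrictions incur ``no loss of generality.'' So your proposal and the paper's proof have a hole in the same place; yours is the more honest framing (reduce from \emph{general} ENCD and construct the special form), but to be a proof you must actually produce the padding construction and verify the three properties you enumerate, or else prove that the restricted variant of ENCD you are implicitly reducing from remains NP-complete.
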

\begin{proof}
We shall construct an instance of ENCD and describe its poly\-nomial-time reduction to an instance of CSDP. We refer to Def.~\ref{def:structural} and Lemma~\ref{lemma:bipartite} and present the argument in detail in \ref{app:coa-npcomplete}.  
$\Box$
\end{proof}

Finally, we formulate an optimization problem associated with CSDP. We refer to it as the Minimal Substructures Problem (MSP). Corollary~\ref{cor:coa-hardness} then finally establishes the hardness of Simon's COA based on RTCM. 

\begin{framed}
\noindent
(MSP). Given a complete structure $\mathcal S(\mathcal E, \mathcal V)$ with $|\mathcal E|=|\mathcal V|=m$, list all its complete substructures $\mathcal S^\prime(\mathcal E^\prime, \mathcal V^\prime)$ with $|\mathcal E^\prime|=|\mathcal V^\prime|=\ell$ where $1 \leq \ell < m$ is minimal.
\end{framed}

\begin{mycor}\label{cor:coa-hardness}
Let $\mathcal S$ be a complete structure. The extraction of its causal ordering by Simon's $COA(\mathcal S)$ through its RTCM procedure requires solving MSP, which is NP-Hard.
\end{mycor}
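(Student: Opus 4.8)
The plan is to establish the corollary in two parts: a structural observation that ties the correctness of Simon's RTCM to MSP, and a complexity-theoretic reduction that certifies MSP as NP-Hard by building on Theorem~\ref{thm:biclique}.

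First I would argue that COA genuinely requires solving MSP, which is immediate from inspecting Algorithm~\ref{alg:coat}: line~3 of the RTCM procedure must ``identify all minimal substructures $\mathcal S^\prime \subseteq \mathcal S$'' (Def.~\ref{def:minimal}) before it can match equations to variables and recurse. Enumerating the minimal substructures of a given complete structure is exactly the task posed by MSP, so any faithful execution of RTCM performs MSP as a mandatory step and the complexity of $COA(\mathcal S)$ is bounded below by that of MSP. I would state this as a one-line consequence of the algorithm's definition, with no further machinery needed.

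Second, and this is the substantive part, I would show MSP is NP-Hard by exhibiting a polynomial-time (Turing) reduction from CSDP, which Theorem~\ref{thm:biclique} certifies NP-Complete. The guiding idea is that CSDP is the decision counterpart of MSP: a procedure that lists the minimal complete substructures of $\mathcal S$ also exposes their cardinalities, and in particular the size $\ell^\star$ of the smallest complete proper substructure. Since a complete substructure of minimum size can contain no proper complete substructure (such a substructure would be strictly smaller), the smallest complete substructure is itself minimal in the sense of Def.~\ref{def:minimal}; therefore the value $\ell^\star$ an MSP solver recovers is exactly the minimum complete-substructure size. Routing this observation through the biclique correspondence of Lemma~\ref{lemma:bipartite} and the bipartite-complement construction used in Theorem~\ref{thm:biclique}, I would decide the CSDP instances that witness NP-completeness, so a polynomial-time algorithm for MSP would place CSDP in P, contradicting Theorem~\ref{thm:biclique} (under $P \neq NP$). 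Concluding, MSP is NP-Hard, and since RTCM must solve it, so is the causal-ordering extraction performed by $COA(\mathcal S)$.

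The step I expect to be the main obstacle is reconciling the ``exact size $\ell$'' demanded by CSDP with the ``minimal size $\ell^\star$'' delivered by MSP: an MSP solver does not, on its face, answer whether a complete substructure of an arbitrary prescribed size exists. The clean way around this is to keep the reduction inside the construction of Theorem~\ref{thm:biclique}, where a biclique $K_{a,b}$ with $a+b=m$ in the bipartite complement corresponds to a complete substructure whose size is forced, so that the sought substructure is itself minimal and is therefore reported by MSP. I would also take care to note that only NP-Hardness is claimed, so MSP need not be shown to lie in NP (as an enumeration problem it is not a decision problem), which keeps the argument confined to the reduction and to Theorem~\ref{thm:biclique}.
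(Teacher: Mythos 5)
Your first step --- reading off from line~3 of RTCM that Simon's procedure must solve MSP, so that the cost of $COA(\mathcal S)$ is bounded below by that of MSP --- is exactly the paper's opening move and is fine. The divergence, and the gap, is in how you certify MSP as NP-Hard. You attempt a Turing reduction from CSDP itself: run an MSP solver, read off the minimal size $\ell^\star$ and the substructures at that size, and use this to answer CSDP. You correctly flag the obstacle --- CSDP asks for a complete substructure of a \emph{prescribed} size $\ell$, while MSP reports only the \emph{minimal} size --- but your proposed fix does not close it. Nothing in the construction of Theorem~\ref{thm:biclique} forces the target substructure to be minimal: the ENCD instances witnessing NP-completeness have $a=\ell$, $b=m-\ell$ with $\ell$ an arbitrary part of the input, and the corresponding structure $\mathcal S$ may well contain complete substructures of size $\ell^\prime < \ell$ (coming from bicliques $K_{\ell^\prime,\, m-\ell^\prime}$ in $G$). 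In that case $\ell^\star < \ell$, MSP lists only the size-$\ell^\star$ substructures, and its output carries no information about whether a size-$\ell$ complete substructure exists. Your observation that a complete substructure of minimum size is itself minimal gives you $\ell^\star \le \ell$ as a \emph{necessary} condition for a yes answer, not a decision procedure; and saying that a biclique $K_{a,b}$ with $a+b=m$ has its ``size forced'' pins down the size of \emph{that particular} substructure, not its minimality within $\mathcal S$.

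The paper sidesteps this mismatch by not reducing from the exact-size decision problem at this stage. It introduces ENCD$^{\,\prime}$, the minimization/enumeration variant of ENCD that is the precise biclique-side mirror of MSP (list all bicliques $K_{\ell,\, m-\ell}$ for the \emph{minimal} $\ell$), reduces ENCD$^{\,\prime}$ to MSP through the same bipartite-complement correspondence, and argues that ENCD$^{\,\prime}$ is NP-Hard because in the worst case it subsumes ENCD searches for each of the $m-1$ candidate values of $\ell$. To repair your route you would need either to show that CSDP remains hard on instances where $\ell$ is promised to equal the minimal complete-substructure size, or to follow the paper and reduce from the minimization variant rather than from CSDP itself.
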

\begin{proof}
Clearly, MSP is the optimization problem that needs to be solved at each recursive step $k$ of Simon's RTCM procedure --- Algorithm~\ref{alg:coat}, line 3, ``find all minimal substructures $\mathcal S^\prime \subseteq \mathcal S$.'' 
%%%
But MSP is clearly an optimization problem that subsumes CSDP, which we know from Theorem~\ref{thm:biclique} that is NP-Complete by a reduction from ENCD.

In fact, an instance of ENCD$^{\,\prime}$ (as an optimization variant of ENCD) that can be reduced to MSP is as follows: given a bipartite graph $G(V_1 \cup V_2, E)$ that bears the complement structural property (cf. Theorem~\ref{thm:biclique}) and has $|V_1|=|V_2|=m$, list all bicliques $K_{\ell,\, m-\ell}$ contained in $G$ where $1 \leq \ell < m$ is minimal. 
%%%
In worst-case scenario, it requires searching for all bicliques $K_{\ell,\, m-\ell}$ for each of the $m-1$ possible values of $\,\ell$. 

ENCD is NP-Complete, therefore ENCD$^{\,\prime}$ is NP-Hard. 
Accordingly, CSDP is NP-Complete (cf. Theorem~\ref{thm:biclique}) therefore MSP is NP-Hard. 
%%%
$\Box$
\end{proof}

COP (Problem~\ref{prob:cop}), nonetheless, can be solved efficiently by means of a different approach due to Nayak \cite{nayak1994}, which we describe in next section.

%%%%%%%%%%%%%%%%%%%%%%%%%%%%%%%%%%%%%%%%%%%%%%
\section{Nayak's Efficient Algorithm to COP}\label{sec:nayak}
\noindent
The first part of COP requires finding a total causal mapping $\varphi\!:\, \mathcal E \to \mathcal V$ over a given complete structure $\mathcal S$. While Simon's RTCM goes into an intractable step, inspired by Serrano and Gossard's work \cite{serrano1987} on constraint modeling and reasoning Nayak has found a polynomial-time approach to that task. 
We cover it next in all of its steps and see their complexity in some detail.

%%%%%%%%%%%%%%%%%%%%%%%%%%%%%%%%%%%%%%%%%%%%%%
\subsection{Total Causal Mappings}\label{sec:tcm}

\noindent
Given a structure $\mathcal S$, there may be more than one total causal mappings over $\mathcal S$ (recall Example \ref{ex:struct-matrix}). So a question that arises is whether the transitive closure $C^+_{\varphi}$ is the same for any total causal mapping $\varphi$ over $\mathcal S$; that is, whether the causal ordering of $\mathcal S$ is unique. Proposition \ref{prop:causal-ordering}, from Nayak \cite{nayak1994}, ensures that is the case.

Before proceeding, we introduce Def.~\ref{def:strongly-coupled} in order to detach the notion of `strongly coupled' variables from `minimal structures' (Def.~\ref{def:minimal}) and connect it to the concept `causal dependency' (Def.~\ref{def:causal-dependency}).

%%%%%%%%%%%%%%%%%%%%%%%%%%%%%%%%%%
\begin{mydef}\label{def:strongly-coupled}
Let $\mathcal S(\mathcal E, \mathcal V)$ be a structure with variables $x_a, x_b \in \mathcal V$, and $C^+_{\varphi}$ be the set of causal dependencies induced by total causal mapping $\varphi$ over $\mathcal S$. We say that $x_a$ and $x_b$ are \textbf{strongly coupled} if we have both $(x_a, x_b), (x_b, x_a) \in C_{\varphi}^+$.
\end{mydef}
%%%%%%%%%%%%%%%%%%%%%%%%%%%%%%%%%%

Recall from Example~\ref{ex:coa} the strongly coupled variables, $x_4$ and $x_5$. Now we can see it only in terms of causal dependencies.

%%%%%%%%%%%%%%%%%%%%%%%%%%%%%%%%%%
\begin{myprop}\label{prop:causal-ordering}
Let $\mathcal S(\mathcal E, \mathcal V)$ be a structure, and $\varphi_1\!:\, \mathcal E \to \mathcal V$ and $\varphi_2\!:\, \mathcal E \to \mathcal V$ be any two total causal mappings over $\mathcal S$. Then $C^+_{\varphi_1}$ = $C^+_{\varphi_2}$. 
\end{myprop}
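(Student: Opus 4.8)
The plan is to reformulate the statement in terms of perfect matchings in the bipartite graph $G \sim \mathcal S$ of Def.~\ref{def:bipartite} and then reduce it to the effect of a single alternating-cycle swap. Associate with each total causal mapping $\varphi$ the directed graph $D_\varphi$ on $\mathcal V$ whose arcs are exactly the pairs of $C_\varphi$ (Eq.~\ref{eq:direct-causal-dependencies}): there is an arc $x_a \to x_b$ whenever the equation matched to $x_b$ also contains $x_a$. Then $C^+_\varphi$ is precisely the reachability relation of $D_\varphi$, so the goal becomes showing that $D_{\varphi_1}$ and $D_{\varphi_2}$ have the same transitive closure. Since $\varphi_1,\varphi_2$ are bijections with $\varphi(f)\in Vars(f)$, they are perfect matchings $M_1,M_2$ in $G$.

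First I would invoke the standard fact that the symmetric difference $M_1 \triangle M_2$ is a vertex-disjoint union of even alternating cycles. Reversing the matched/unmatched roles along one such cycle at a time yields a sequence $M_1 = N_0, N_1, \ldots, N_t = M_2$ of perfect matchings, each using only edges of $G$ and hence each a valid total causal mapping. It therefore suffices to treat the single-cycle case: if $N'$ is obtained from $N$ by reversing one alternating cycle, then $C^+_N = C^+_{N'}$; the proposition follows by transitivity of equality along the sequence.

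For the single-cycle case, write the cycle as $f_1, x_1, f_2, x_2, \ldots, f_r, x_r$ with $N(f_i)=x_i$ and $N'(f_i)=x_{i-1}$ (indices mod $r$), so every $f_i$ contains both $x_i$ and $x_{i-1}$. Reading off direct dependencies shows that $x_1 \to x_2 \to \cdots \to x_r \to x_1$ is a directed cycle in $D_N$ while $x_r \to x_{r-1} \to \cdots \to x_1 \to x_r$ is one in $D_{N'}$; hence the cycle-variable set $X=\{x_1,\ldots,x_r\}$ is a single strongly connected component in both $D_N$ and $D_{N'}$, i.e. its members are pairwise strongly coupled (Def.~\ref{def:strongly-coupled}), and all pairs within $X$ lie in both closures. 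Off the cycle the two matchings coincide, so any arc whose head lies outside $X$ belongs to $D_N$ iff it belongs to $D_{N'}$. The core is then a path-surgery argument: given a directed path in $D_N$, take its first entry into $X$ and its last exit from $X$; the entering arc (coming from some $f_i$, with tail outside $X$) is replaced using $N'(f_i)=x_{i-1}$, the passage inside $X$ is rerouted using strong connectivity of $X$ in $D_{N'}$, and the exiting arc together with the pre-entry and post-exit segments transfer verbatim because they are governed by equations not among $f_1,\ldots,f_r$. This produces a $D_{N'}$-path with the same endpoints, giving $C^+_N \subseteq C^+_{N'}$; symmetry yields equality.

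I expect the main obstacle to be the bookkeeping in the path-surgery step: verifying that only arcs whose head lies in $X$ are disturbed by the swap (so that portions of a path living outside $X$ are preserved) and handling paths that enter and leave $X$ several times by collapsing everything between first entry and last exit via the strong connectivity of $X$. Establishing that $X$ is genuinely one strongly connected component in each $D_\varphi$ — not merely that it contains a directed cycle — is what licenses this collapse and is the key enabling fact; by contrast the reduction to alternating cycles and the reformulation of $C^+_\varphi$ as reachability are routine.
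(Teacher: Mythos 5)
Your proof is correct, and at its core it rests on the same fact as the paper's: the disagreement between $\varphi_1$ and $\varphi_2$ organizes itself into cycles of the permutation $\varphi_2\circ\varphi_1^{-1}$ (equivalently, the alternating cycles of $M_1\triangle M_2$), and the variables on each such cycle form a directed cycle in both causal graphs, hence are strongly coupled. The packaging differs, though. The paper (following Nayak, see \ref{coa-efficient}) never decomposes the symmetric difference explicitly: it first observes that by extensivity and idempotence of closure it suffices to prove $C_{\varphi_1}\subseteq C^+_{\varphi_2}$, and then, for a single pair $(x',x)\in C_{\varphi_1}$, traces the orbit of $x$ under $\varphi_2\circ\varphi_1^{-1}$ to exhibit a $C_{\varphi_2}$-path witnessing $(x',x)\in C^+_{\varphi_2}$. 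You instead reduce to a single alternating-cycle swap and then transfer whole paths by surgery. Both are sound, but the closure-operator reduction would let you delete your most delicate step entirely: once you know that arcs with head outside $X$ are untouched and that an arc $w\to x_i$ of $D_N$ yields $w\to x_{i-1}$ in $D_{N'}$ with $x_{i-1},x_i$ in the strongly connected set $X$, you have already shown every arc of $D_N$ lies in $C^+_{N'}$, and $C^+_N\subseteq C^+_{N'}$ follows with no first-entry/last-exit bookkeeping. What your version buys in exchange is a cleaner structural picture (one cycle at a time, each intermediate matching itself a valid total causal mapping), which makes the role of strong coupling (Def.~\ref{def:strongly-coupled}) more visible than in the paper's sequence construction.
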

\begin{proof}
The proof is based on an argument from Nayak \cite{nayak1994}, which we present in a bit more of detail (see \ref{coa-efficient}). Intuitively, it shows that if $\varphi_1$ and $\varphi_2$ differ in the variable an equation $f$ is mapped to, then such variables, viz., $\varphi_1(f)=x_a$ and $\varphi_2(f)=x_b$, must be causally dependent on each other (strongly coupled). 
$\Box$
\end{proof}

Another issue is concerned with the precise conditions under which total causal mappings exist (i.e., whether or not all variables in the equations can be causally determined). In fact, by Proposition \ref{prop:mapping-existence}, based on Nayak \cite{nayak1994} apud. Hall \cite[p. 135-7]{even2011}, we know that the existence condition holds if and only if the given structure is complete. 
We refer to Even \cite{even2011} to briefly introduce the additional graph-theoretic concepts that are necessary here: 

\begin{itemize}
\item A \emph{matching} in a graph is a subset of edges such that no two edges in the matching share a common node. 
\item A matching is said \emph{maximum} if no edge can be added to the matching (without hindering the matching property). 
\item Finally, a matching in a graph is said `perfect' if every vertex is an end-point of some edge in the matching --- in a bipartite graph, a perfect matching is said to be a \emph{complete} matching. 
\end{itemize}

\begin{myprop}\label{prop:mapping-existence}
Let $\mathcal S(\mathcal E, \mathcal V)$ be a structure. Then a total causal mapping $\varphi\!:\, \mathcal E \to \mathcal V$ over $\mathcal S$ exists if and only if $\mathcal S$ is complete. 
\end{myprop}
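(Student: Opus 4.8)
The plan is to recast the statement entirely in the language of the corresponding bipartite graph $G \sim \mathcal S$ from Def.~\ref{def:bipartite} and then appeal to Hall's theorem, exactly as the attribution to Hall suggests. The crucial observation I would make first is that a total causal mapping and a \emph{complete} (perfect) matching of $G$ are one and the same object: a bijection $\varphi$ with $\varphi(f) \in Vars(f)$ is precisely a set of $|\mathcal E|$ edges of $G$, no two sharing an endpoint, that saturates every vertex. Under the correspondence $V_1 \mapsto \mathcal E$, $V_2 \mapsto \mathcal V$, this reduces the proposition to the purely graph-theoretic claim: $G$ admits a complete matching if and only if $|V_1| = |V_2|$.

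The ``only if'' direction I would dispose of immediately. A total causal mapping is by definition a bijection $\varphi: \mathcal E \to \mathcal V$, and a bijection between finite sets forces $|\mathcal E| = |\mathcal V|$; by Def.~\ref{def:complete} this is exactly completeness, so no graph machinery is needed here. I would, however, read the existence of a bijection $\varphi$ with $\varphi(f) \in Vars(f)$ as the primitive notion in this direction, so that the biconditional stays non-vacuous even though Def.~\ref{def:tcm} nominally introduces total causal mappings only over complete structures.

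For the ``if'' direction---the substantive half---I would assume $\mathcal S$ complete, so that $|V_1| = |V_2| = m$, and verify Hall's marriage condition on the side $V_1 \mapsto \mathcal E$. For any $V_1' \subseteq V_1$ corresponding to a set $\mathcal E' \subseteq \mathcal E$ of $k$ equations, its neighborhood in $G$ is $N(V_1') = \bigcup_{f \in \mathcal E'} Vars(f)$, i.e.\ the set $\mathcal V'$ of variables appearing in $\mathcal E'$. Structure property (a) of Def.~\ref{def:structure} asserts precisely that $|\mathcal E'| \leq |\mathcal V'|$, that is, $|V_1'| \leq |N(V_1')|$, which is Hall's condition. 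Hall's theorem then yields a matching saturating $V_1$; and since $|V_1| = |V_2| = m$, such a matching covers $m$ distinct vertices of $V_2$ and hence saturates $V_2$ as well, so it is complete. Reading this complete matching back as a bijection delivers the desired total causal mapping.

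The main obstacle is conceptual rather than technical: recognizing that condition (a) of the structure definition \emph{is} Hall's condition under the correspondence $G \sim \mathcal S$, and that completeness, $|V_1| = |V_2|$, is exactly what upgrades a $V_1$-saturating matching to a perfect one. Once these two identifications are in place, the argument is a direct invocation of Hall's theorem, with no case analysis or nontrivial computation remaining.
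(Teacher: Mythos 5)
Your proof is correct and follows essentially the same route as the paper: both translate the structure to its bipartite graph, identify a total causal mapping with a complete matching, and invoke Hall's theorem, with condition (a) of Def.~\ref{def:structure} supplying Hall's condition and Def.~\ref{def:complete} supplying the cardinality balance. Your treatment is slightly more explicit about the easy ``only if'' direction and about the definitional subtlety that Def.~\ref{def:tcm} nominally presupposes completeness, but these are refinements of, not departures from, the paper's argument.
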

\begin{proof}
We observe that a total causal mapping $\varphi\!:\, \mathcal E \to \mathcal V$ over $\mathcal S$ corresponds exactly to a complete matching $M$ in a bipartite graph $B = (V_1 \cup V_2, E)$, where $V_1 \mapsto \mathcal E$, $V_2 \mapsto \mathcal V$, and $E \mapsto \mathcal S$.
In fact, by Even apud. Hall's theorem (cf. \cite[135-7]{even2011}), we know that $B$ has a complete matching iff (a) for every subset of vertices $F \subseteq V_1$, we have $|F| \leq |E(F)|$, where $E(F)$ is the set of all vertices connected to the vertices in $F$ by edges in $E$; and (b) $|V_1|=|V_2|$.  
By Def. \ref{def:structure} (no subset of equations has fewer variables than equations), and Def. \ref{def:complete} (number of equations is the same as number of variables), it is easy to see that conditions (a) and (b) above hold iff $\mathcal S$ is a complete structure. 
$\Box$
\end{proof}

The problem of finding a maximum matching is a well-studied algorithmic problem. The Hopcroft-Karp algorithm is a classical solution \cite{karp1973}, bounded in polynomial time by $O(\sqrt{|V_1|+|V_2|}\,|E|)$. It solves maximum matching in a bipartite graph efficiently as a problem of maximum flow in a network (cf. \cite[p. 135-7]{even2011}, or \cite[p. 763]{cormen2009}). That is, we can handle the problem of finding a total causal mapping $\varphi$ over a structure $\mathcal S$ (see Alg. \ref{alg:tcm}) by first translating it to the problem of maximum matching in a bipartite graph in time $O(|\mathcal S|)$. Then we can just apply the Hopcroft-Karp algorithm to get the matching and finally translate it back to the total causal mapping $\varphi$. This procedure has been suggested by Nayak in connection with his Proposition \ref{prop:mapping-existence} and its respective proof \cite{nayak1994}.

Fig. \ref{fig:matching} shows the complete matching found by the Hopcroft-Karp algorithm for the structure given in Example \ref{ex:struct-matrix}.

%%%%%%%%%%%%%%%%%%%% FIG GRAPH %%%%%%%%%%%%%%
\begin{figure}[H]
\begin{center}
\tikzstyle{rect}=[rectangle,
                                    thick,
                                    minimum size=0.3cm,
                                    draw=black]
\tikzstyle{circ}=[circle,
                                    thick,
                                    minimum size=0.3cm,
                                    draw=black]
\tikzstyle{vertex}=[circle,fill=black!10,minimum size=20pt,inner sep=0pt]
\tikzstyle{selected vertex} = [vertex, fill=red!24]
\tikzstyle{edge} = [draw,thick,->,bend left]
\tikzstyle{weight} = [font=\small]
\tikzstyle{selected edge} = [draw,line width=5pt,-,red!50]
\tikzstyle{ignored edge} = [draw,line width=5pt,-,black!20]
%%%%%%%%%%%%%%%%%%%%%
\begin{tikzpicture}[scale=0.85]
    % Draw a 7,11 network
    % First we draw the vertices
    \foreach \pos/\name in {{(0,12)/f_1}, {(6,12)/x_1}, {(0,11)/f_2}, {(6,11)/x_2},
					{(0,10)/f_3}, {(6,10)/x_3}, {(0,9)/f_4}, {(6,9)/x_4},  
					{(0,8)/f_5}, {(6,8)/x_5}, {(0,7)/f_6}, {(6,7)/x_6}, 
		 			 {(0,6)/f_7}, {(6,6)/x_7}}
        \node[vertex] (\name) at \pos {$\name$};
    % Connect vertices with edges and draw weights
%    \foreach \source/ \dest /\weight in {x_3/x_4/, 
%    							x_1/x_5/,
%							x_4/x_6/, x_5/x_7/}
%        \path[edge] (\source) -> node[weight] {$\weight$} (\dest);
    % draw the arrows
    \draw[-] (f_1) to (x_1);
    \draw[-] (f_2) to (x_2);
    \draw[-] (f_3) to (x_3);
    \draw[-] (f_4) to (x_4);
    \draw[-] (f_5) to (x_5);
    \draw[-] (f_6) to (x_6);
    \draw[-] (f_7) to (x_7);
%    \draw[->] (x_4) to[out=10,in=200] (x_5);
\end{tikzpicture}
\end{center}
\vspace{-5pt}
\caption{Complete matching $M$ for structure $S$ from Example \ref{ex:struct-matrix}.}\label{fig:matching}
\end{figure}
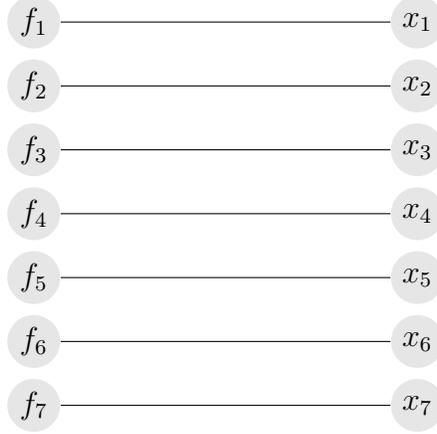
%%%%%%%%%%%%%%%%%%%%%%%%%%%%%%%%%%%%%%%%%%%%%%

%%%%%%%%%%%%%%%%% ALG - TCM  %%%%%%%%%%%%%%%%%
\begin{spacing}{1.1}
\begin{algorithm}[H]\footnotesize
\caption{Find a total causal mapping for a given structure.}
\label{alg:tcm}
\begin{algorithmic}[1]
\Procedure{TCM}{$\mathcal S\!:\, \text{structure over}\; \mathcal E \;\text{and}\; \mathcal V$}
\Require $\mathcal S$ given is a complete structure, i.e., $|\mathcal E|=|\mathcal V|$
\Ensure Returns a total causal mapping $\varphi$
\State $B(V_1 \cup V_2, E) \gets \varnothing$
\State $\varphi \gets \varnothing$
\ForAll{$\langle f, X\rangle \in \mathcal S$} \Comment{translates structure $\mathcal S$ to a bipartite graph $B$}\vspace{1pt}
\State $V_1 \gets V_1 \cup \{f\}$
\ForAll{$x \in X$}\vspace{1pt}
\State $V_2 \gets V_2 \cup \{x\}$
\State $E \gets E \cup \{(f, x)\}$
\EndFor
\EndFor
\State $M \gets \textsf{Hopcroft-Karp}(B)$\Comment{solves the maximum matching problem}
\ForAll{$(f, x) \in M$}\vspace{1pt}\Comment{translates the matching to a total causal mapping}
\State $\varphi \gets \varphi \cup \{\langle f, x\rangle\}$
\EndFor
\State \Return $\varphi$\vspace{1pt}
\EndProcedure
\end{algorithmic}
\end{algorithm}
\end{spacing}
%%%%%%%%%%%%%%%%%%%%%%%%%%%%%%%%%%%%%%%%%%%%%%%

\noindent
Corollary \ref{cor:tcm} and Remark \ref{rmk:correct-mapping} summarize the results presented in this note.

\begin{mycor}\label{cor:tcm}
Let $\mathcal S(\mathcal E, \mathcal V)$ be a complete structure. Then a total causal mapping $\varphi\!:\, \mathcal E \to \mathcal V$ over $\mathcal S$ can be found by (Alg. \ref{alg:tcm}) TCM in time that is bounded by $O(\sqrt{|\mathcal V|} \cdot |\mathcal S|)$. 
\end{mycor}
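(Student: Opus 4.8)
The plan is to separate the argument into a correctness claim and a running-time accounting, since Algorithm~\ref{alg:tcm} proceeds in three clearly delimited phases: the construction of the bipartite graph $B$ (lines 3--9), the Hopcroft-Karp matching (line 10), and the back-translation of the matching into $\varphi$ (lines 11--13). I would show that the matching phase dominates the other two and yields the stated bound $O(\sqrt{|\mathcal V|}\cdot|\mathcal S|)$.

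For correctness, I would invoke Proposition~\ref{prop:mapping-existence} directly. Since $\mathcal S$ is complete by hypothesis, that proposition guarantees a total causal mapping exists and, via the correspondence $V_1 \mapsto \mathcal E$, $V_2 \mapsto \mathcal V$, $E \mapsto \mathcal S$, that it coincides exactly with a \emph{complete} matching in $B$. The Hopcroft-Karp call returns a \emph{maximum} matching $M$; because a complete matching exists and every complete matching is maximum, $M$ must be complete, hence it matches each $f \in \mathcal E$ to a distinct $x \in Vars(f)$. The loop in lines 11--13 then transcribes $M$ edge-by-edge into a bijection $\varphi$ satisfying Definition~\ref{def:tcm}.

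For the time bound I would account for each phase in turn. The construction loop visits every variable appearance once, and by Definition~\ref{def:bipartite} the edges of $B$ are precisely those appearances, so $|E|=|\mathcal S|$ and the phase costs $O(|\mathcal S|)$. The Hopcroft-Karp algorithm runs in $O(\sqrt{|V_1|+|V_2|}\,|E|)$; completeness (Def.~\ref{def:complete}) gives $|V_1|=|\mathcal E|=|\mathcal V|=|V_2|$, so $\sqrt{|V_1|+|V_2|}=\sqrt{2|\mathcal V|}=O(\sqrt{|\mathcal V|})$ and this phase costs $O(\sqrt{|\mathcal V|}\cdot|\mathcal S|)$. The back-translation iterates over the $|M|=|\mathcal V|$ matched edges, costing $O(|\mathcal V|)$. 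Summing the three phases and using that every variable appears in at least one equation (so $|\mathcal V|\le|\mathcal S|$, whereby both $O(|\mathcal S|)$ and $O(|\mathcal V|)$ are dominated by $O(\sqrt{|\mathcal V|}\cdot|\mathcal S|)$) gives the claimed total.

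The only delicate point is the specialization of the Hopcroft-Karp bound: one must use completeness to replace the generic $\sqrt{|V_1|+|V_2|}$ factor by $O(\sqrt{|\mathcal V|})$, and must verify that the linear-time pre- and post-processing are genuinely absorbed. This is routine, but it rests on the inequality $|\mathcal V|\le|\mathcal S|$, which I would state explicitly so the final simplification is unambiguous.
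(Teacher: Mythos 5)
Your proposal is correct and follows essentially the same route as the paper's own proof: translate $\mathcal S$ to the bipartite graph in $O(|\mathcal S|)$, observe that Hopcroft-Karp's $O(\sqrt{|V_1|+|V_2|}\,|E|)$ bound specializes to $O(\sqrt{2|\mathcal V|}\cdot|\mathcal S|)$ under completeness, and absorb the linear pre- and post-processing via $|\mathcal V|\le|\mathcal S|$. The only difference is that you also argue correctness of the returned mapping via Proposition~\ref{prop:mapping-existence}, which the paper's proof of this corollary leaves implicit; that is a harmless (indeed welcome) addition, not a different approach.
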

\begin{proof}
Let $B = (V_1 \cup V_2, E)$ be the bipartite graph corresponding to complete structure $\mathcal S$ given to TCM, where $V_1 \mapsto \mathcal E$, $V_2 \mapsto \mathcal V$, and $E \mapsto \mathcal S$. 
The translation of $\mathcal S$ into $B$ is done by a scan over it. This scan is of length $|\mathcal S| = |E|$. Note that number $|E|$ of edges rendered is precisely the length $|\mathcal S|$ of structure, where the denser the structure, the greater $|\mathcal S|$ is. The re-translation of the matching computed by internal procedure Hopcroft-Karp, in turn, is done at expense of $|\mathcal E| = |\mathcal V| \leq |\mathcal S|$. Thus, it is easy to see that TCM is dominated by the maximum matching algorithm Hopcroft-Karp, which is known to be $O(\sqrt{|V_1|+|V_2|}\cdot |E|)$, i.e., $O(\sqrt{|\mathcal E|+|\mathcal V|} \cdot |\mathcal S|)$. Since $\mathcal S$ is assumed complete, we have $|\mathcal E| \!=\! |\mathcal V|$ then $\sqrt{|\mathcal V|+|\mathcal V|} = \sqrt{2}\,\sqrt{|\mathcal V|}$. Therefore, \textsf{TCM} must have running time at most $O(\sqrt{|\mathcal V|}\cdot |\mathcal S|)$.
$\Box$
\end{proof}

%%%%%%%%%%%%%%% CONCLUSIONS %%%%%%%%%%%%%%%%
\subsection{Computing Transitive Closure}\label{subsec:closure}
\noindent
Finally, recall that the set $C_{\varphi}$ of direct causal dependencies induced by a total causal mapping $\varphi$ over a given structure $\mathcal S(\mathcal E, \mathcal V)$ produces to the so-called `causal graph,' i.e., a directed graph (digraph) $G(V, E)$ where $V \mapsto \mathcal V$ and $E \mapsto C_{\varphi}$. So, computing set $C_{\varphi}^+$ of causal dependencies given $C_{\varphi}$ corresponds to computing transitive closure (reachability links) on $G$. Note that $|V|=|\mathcal V|$, and also note that $|E|=|C_{\varphi}|=|\mathcal S|\!-\!|\mathcal V|=O(|\mathcal S|)$. 

Classical algorithms for such task (e.g., Floyd-Warshall's) are bounded in time $O(|\mathcal V|^3)$ \cite[p.~697]{cormen2009}. Another way to do it is by discovering reachability links using either one of the popular graph traversal algorithms, breadth-first search or depth-first search (DFS) \cite[p.~603]{cormen2009}. Algorithm~\ref{alg:tc} describes DFS-based transitive closure over digraph $G(V, E)$. It runs in time $O(|V|\cdot |E|)$, which means $O(|\mathcal V|\cdot |\mathcal S|)$ for a complete structure $\mathcal S(\mathcal E, \mathcal V)$.

%%%%%%%%%%%%%%%%% ALG - DFS  %%%%%%%%%%%%%%%%%
\begin{spacing}{1.1}
\begin{algorithm}[H]\footnotesize
\caption{DFS-based transitive closure.}
\label{alg:tc}
\begin{algorithmic}[1]
\Procedure{TC}{$\;G(V, E)\!: \text{digraph}$} \Comment{where $G$ is such that $V \mapsto \mathcal V$ and $E \mapsto C_{\varphi}$}
%%%
%\Ensure $G^+(V, E^+)$ is the transitive closure of $G(V, E)$
%%%
\State $E^+ \gets \varnothing$ %\Comment{could start receiving $E$ either}
\ForAll{$v \in V$} \Comment{for all vertices $v$ in digraph $G$}\vspace{1pt}
\State $D \gets \varnothing$ \Comment{initializes $D$}
\State $\text{DFS}(G,v, D)$ \Comment{discovers into $D$ all $u$, where $v$ is reachable from $u$}
%\ForAll{$u \in D \setminus \{v\}$} \Comment{doing irreflexive transitive closure}
%\State $E^+ \gets E^+ \cup \{(u, v)\}$ 
\State $D \gets D \setminus \{v\}$ \Comment{enforces an irreflexive transitive closure}
\State $E^+ \gets \bigcup_{u \in D} \{(u, v)\} \cup E^+$ 
%\EndFor
\EndFor
\State \Return $G^+(V, E^+)$\vspace{1pt}
\EndProcedure
%%%%%%%%
\vspace{2pt}
\hrule
\vspace{2pt}
\Procedure{DFS}{$G\!:\, \text{digraph},\; v\!: \;\text{vertex},\; D\!: \text{global set of discovered vertices}$}
%\Procedure{DFS}{$G\!:\, \text{digraph},\; v\!: \;\text{vertex}$}
\State $D \gets D \cup \{v\}$ \Comment{labels $v$ as discovered}
\ForAll{$u$ where $(u, v) \in G$}\vspace{1pt}
\If{$u \notin D$} \Comment{vertex $u$ is not yet labeled as discovered}
\State $\text{DFS}(G,u, D)$
\EndIf
\EndFor
%\State \Return $D \setminus \{v\}$ \Comment{for an irreflexive transitive closure}\vspace{1pt}
\State \Return \vspace{1pt}
\EndProcedure
\end{algorithmic}
\end{algorithm}
\end{spacing}
%%%%%%%%%%%%%%%%%%%%%%%%%%%%%%%%%%%%%%%%%%%%%%%

\vspace{-5pt}
\begin{myremark}\label{rmk:correct-mapping}
Let $\mathcal S(\mathcal E, \mathcal V)$ be a complete structure. Then we know (cf. Proposition \ref{prop:mapping-existence}) that a total causal mapping over $\mathcal S$ exists. Let it be defined $\varphi \triangleq TCM(\mathcal S)$, which can be done in $O(\sqrt{|\mathcal V|} \cdot |\mathcal S|)$. Then the causal ordering implicit in $\mathcal S$ can be correctly extracted (cf. Proposition \ref{prop:causal-ordering}) by computing $C^+_\varphi$, the set of causal dependencies induced by $\varphi$, in terms of graph transitive closure (TC). The latter is bounded in time by $O(|\mathcal V|\cdot |\mathcal S|)$, that is, the complexity of COP is dominated by TC.

In other words, the complete recipe to solve COP consists in replacing Simon's RTCM by Nayak's TCM in COA (Algorithm~\ref{alg:coat}). Transitive closure (TC) in turn is computed as described in Algorithm~\ref{alg:tc}.
$\Box$
\end{myremark}

%%%%%%%%%%%%%%% CONCLUSIONS %%%%%%%%%%%%%%%%
\section{Conclusions}
\noindent
Causal ordering inference is a classical problem in the AI literature, and still relevant in light of modern applications \cite{druzdzel2008}, e.g., large-scale hypothesis management and analytics \cite{goncalves2015cise}. In this note we have shown that Simon's classical algorithm (COA) tries to address an NP-Hard problem; and then we have given a detailed account on the state-of-the-art algorithms for the causal ordering problem (COP, stated as Problem~\ref{prob:cop}). The key points are:

\begin{itemize}
\item By Theorem~\ref{thm:biclique} and Corollary~\ref{cor:coa-hardness}, we know (an original hardness result) that Simon's approach to COP requires solving an NP-Hard problem; 

\item From the seminal work of Simon \cite{simon1953} (cf. \S\ref{sec:problem}) and Nayak \cite{nayak1994} (cf. \S\ref{sec:nayak}, and Propositions \ref{prop:causal-ordering} and \ref{prop:mapping-existence}), an approach is conveyed to solve COP efficiently; 

\item By Corollary \ref{cor:tcm}, we know how to process a complete structure into a total causal mapping in time that is bounded by $O(\sqrt{|\mathcal V|}\cdot |\mathcal S|)$. This is a core step to solve COP, which Simon's COA in turn makes intractable. 

\item By Remark \ref{rmk:correct-mapping}, we know how to extract the \emph{causal ordering} of a complete structure in time $O(|\mathcal V|\cdot |\mathcal S|)$, that is, in sub-quadratic time on the structure density (number of variable appearances). 
The machinery of causal ordering is then suitable for processing very large structures. 
\end{itemize}

%%%%%%%%%%%%%%% REFS %%%%%%%%%%%%%%%%
\section*{Acknowledgments}
\noindent
We thank three anonymous reviewers for their careful reading and sharp criticism on a previous version of this manuscript. This work has been supported by the Brazilian funding agencies CNPq (grants n$^o\!$ 141838/2011-6, 309494/2012-5) and FAPERJ (grants INCT-MACC E-26/170.030/2008, `Nota $\!$10' $\!$E-26/100.286/2013). We thank IBM for a Ph.D. Fellowship award. 

%%%%%%%%%%%%%%% REFS %%%%%%%%%%%%%%%%
%\section*{References}

\bibliographystyle{elsarticle-num}
\bibliography{note}
%\vspace{-10pt}

%% The Appendices part is started with the command \appendix;
%% appendix sections are then done as normal sections
\appendix

\section{Proof of Proposition~\ref{prop:disjoint}}\label{app:disjoint}
\noindent
\emph{Let $\mathcal S$ be a complete structure. If $\mathcal S_1, \mathcal S_2 \subset \mathcal S$ are any different minimal substructures of $\mathcal S$, then they are disjoint.}

\begin{proof}
We show the statement by case analysis and then contradiction out of Defs.~\ref{def:structure}--\ref{def:minimal}. By assumption both $\mathcal S_1, \mathcal S_2$ are minimal (hence complete). Let their size be $|\mathcal V_1|=|\mathcal E_1|=\ell$ and $|\mathcal V_2|=|\mathcal E_2|=m$. Let also $\ell \leq m$. The argument is analogous otherwise but it shall be convenient to keep a placeholder for the size of the smaller substructure (with no loss of generality). 

By Def.~\ref{def:minimal} (minimal structures), we know that $\mathcal S_1 \not\subseteq \mathcal S_2$ and $\mathcal S_1 \not\supseteq \mathcal S_2$. Now suppose $\mathcal S_1, \mathcal S_2$ are not disjoint. Then %we have $\mathcal S_1 \cap \mathcal S_2 \neq \varnothing$. 
%%%
by Def.~\ref{def:disjoint} there must be at least one shared variable $x \in \mathcal V_1, \mathcal V_2$, and thus we must have $|\mathcal V_1 \cup \mathcal V_2| \leq \ell+m-1$. 

We can then proceed through case analysis by inquiring how many equations are shared by $\mathcal S_1, \mathcal S_2$. Since $\mathcal S_1$ is minimal with $|\mathcal E_1|=|\mathcal V_1|=\ell$ for $1 \leq \ell \leq m$, the number of equations that are shared with $\mathcal S_2$ could be any $0 \leq k < \ell$. (Note that the case of $k=\ell$ shared equations would lead to the more obvious contradiction that $\mathcal S_1 \subseteq \mathcal S_2$, even though $\mathcal S_2$ is minimal). 

Let us start with the case $\mathcal E_1 \cap \mathcal E_2 = \varnothing$ to illustrate the rationale in its simplest form. In this case, no equations are shared yet at least one variable is. Then we have $|\mathcal E_1 \cup \mathcal E_2| = \ell+m$, but $|\mathcal V_1 \cup \mathcal V_2| \leq \ell+m-1$. Since we have both $\mathcal S_1, \mathcal S_2 \subset \mathcal S$, in fact we have their sets of equations $\mathcal E_1, \mathcal E_2 \subset \mathcal E$ as well and then $\mathcal E_1 \cup \mathcal E_2 \subseteq \mathcal E$. Now, by Def.~\ref{def:structure} (valid structure), we know that in any subset of $k>0$ equations of $\mathcal S$, at least $k$ different variables must appear. %(otherwise $\mathcal S$ would have redundant equations and be an overconstrained system). %This condition, of course, must be valid for subset $\mathcal E_1 \cup \mathcal E_2 \subseteq \mathcal E$. 
But rather we have $|\mathcal E_1 \cup \mathcal E_2| = \ell+m$ and yet $|\mathcal V_1 \cup \mathcal V_2| \leq \ell+m-1$. That is, we reach a contradiction to Def.~\ref{def:structure}, viz., $|\mathcal E_1 \cup \mathcal E_2| > |\mathcal V_1 \cup \mathcal V_2|$. \lightning. 

The next case is when one equation is shared ($|\mathcal E_1 \,\cap\, \mathcal E_2| = 1$). %Let it be $f \in \mathcal E_1, \mathcal E_2$.  
Note that, if we had $|\mathcal E_1|=|\mathcal V_1|=\ell=1$ in particular then the only equation $f \in \mathcal E_1$ would have $|Vars(f)|=1$ and be shared with $\mathcal E_2$, making $\mathcal S_1 \subseteq \mathcal S_2$ even though $\mathcal S_2$ is assumed minimal. \lightning. We rather know that $|\mathcal E_1|=\ell \geq 2$. Also, note that we must have $|Vars(f)| \geq 2$ for all $f \in \mathcal E_1$, otherwise there would be some $g \in \mathcal E_1$ with $|Vars(g)|=1$ even though $|\mathcal E_1| \geq 2$. That is, we would have a minimal substructure within $\mathcal S_1$, although it is minimal. 

So, since one equation is shared and for all $f \in \mathcal E_1$ we have $|Vars(f)| \geq 2$, then at least two variables must be shared. Observe then that $|\mathcal E_1 \cup \mathcal E_2| = \ell+m-1$ (since exactly one equation is shared) and $|\mathcal V_1 \cup \mathcal V_2| \leq \ell+m-2$ (at least two variables are shared). Again, we see the same contradiction in face of Def.~\ref{def:structure}, viz., $|\mathcal E_1 \cup \mathcal E_2| > |\mathcal V_1 \cup \mathcal V_2|$. \lightning. 

Now we complete the case analysis by making the argument abstract for any number of shared equations, $0 \leq k < \ell$ (an inductive step, n.b., is not required because $k \in \mathbb N$ is bounded. Note that, for any such number $0 \leq k < \ell$, we must have at least $k+1$ shared variables, otherwise the shared substructure having $k$ equations, formed out of $\mathcal E_1 \cap \mathcal E_2$, would be minimal as well even though $\mathcal E_1 \cap \mathcal E_2 \subseteq \mathcal E_1, \mathcal E_2$ (that is, rendering both $\mathcal S_1, \mathcal S_2$ non-minimal. \lightning). However, once more we see that this contradicts Def.~\ref{def:structure}. \lightning.  $\Box$
\end{proof}

%%%%%%%%%%%%%%%%% CLOSE - ADDITION %%%%%%%%%%%%%%%%%

\section{Proof of Lemma \ref{lemma:bipartite}}\label{app:coa-bipartite}
\noindent
\emph{Let $\mathcal S(\mathcal V, \mathcal E)$ be a complete structure with $|\mathcal E|=|\mathcal V|=m$ and $1 \leq \ell < m$ provide an instance of CSDP. Let also $G(V_1 \cup V_2, E)$ be a bipartite graph $G \sim \mathcal S$. Then $\mathcal S$ has a substructure $\mathcal S^\prime$ that gives a yes answer to CSDP if and only if $G$ has a bipartite subgraph $G^\prime(V_1^\prime \cup V_2^\prime, E^\prime)$ such that $G^\prime \sim \mathcal S^\prime$ and all of these conditions hold: 
\begin{itemize}
\item[(i)]  Bipartite subgraph $G^\prime$ is structural; 
%%%
\item[(ii)] For every $f \in V_1^\prime$, there is an edge $(f, x) \in E$ only if $x \in V_2^\prime$;
%%%
%\footnote{That is, supergraph $G$ can have no edge $(f, x) \in E$ where $f \in V_1^\prime$ but $x \in V_2 \setminus V_2^\prime$.}
%%%
\item[(iii)]  Bipartite subgraph $G^\prime$ is $\ell$-balanced, that is, $|V_1^\prime| = |V_2^\prime|=\ell$;
%%%
\end{itemize}
}

\begin{proof}
First, we consider the `if' statement---that is, all conditions (i-iii) together are sufficient. Let $G^\prime \subset G$ be a bipartite subgraph $G^\prime(V_1^\prime \cup V_2^\prime, E^\prime)$ that satisfies all conditions (i-iii), and $\mathcal S^\prime(\mathcal E^\prime, \mathcal V^\prime)$ be a substructure of $\mathcal S$ with $G^\prime \sim \mathcal S^\prime$. We shall see that such $\mathcal S^\prime$ does give a yes answer to CSDP, that is, it is a complete substructure with $|\mathcal E^\prime|=|\mathcal V^\prime|=\ell$. 

From condition (i) we know that $G^\prime$ is structural (Def.~\ref{def:structural}). That is, for every $V_1^{\prime\prime} \subseteq V_1^\prime$, there is a connected bipartite subgraph $G^{\prime\prime}(V_1^{\prime\prime} \cup V_2^{\prime\prime}, E^{\prime\prime})$ with $|V_1^{\prime\prime}| \leq |V_2^{\prime\prime}|$. Since $V_1^\prime \mapsto \mathcal E^\prime$, $V_2^\prime \mapsto \mathcal V^\prime$ and $E^\prime \mapsto \mathcal S^\prime$, such property bears obvious resemblance with Def.~\ref{def:structure} (structure). That is, the `connected' bipartite subgraph aspect implies that, for any subset of $|\mathcal E^{\prime\prime}|$ equations in $\mathcal E^\prime$, at least $|\mathcal V^{\prime\prime}| \geq |\mathcal E^{\prime\prime}|$ variables appear and each such variable $x \in \mathcal V^{\prime\prime}$ appears in some $f \in \mathcal E^{\prime\prime}$, otherwise $x \in V_2^{\prime\prime}$ would be disconnected in $G^{\prime\prime}(V_1^{\prime\prime} \cup V_2^{\prime\prime}, E^{\prime\prime})$. 

Condition (ii) ensures in addition that $\bigcup_{f \in \mathcal E^\prime} Vars(f) = \mathcal V^\prime$. That is, the variables in $\mathcal V^\prime$ are exhaustive w.r.t. $\mathcal E^\prime$. Thus, structure $\mathcal S^\prime$ satisfies Def.~\ref{def:structure}. Finally, condition (iii) ensures that $\mathcal S^\prime$ is complete with $|\mathcal E^\prime|=|\mathcal V^\prime|=\ell$. 

We consider now the `only if' statement---i.e., every condition (i-iii) is necessary. We assume a bipartite graph $G^\prime \sim \mathcal S^\prime$ and show that lacking any one such condition implies that $\mathcal S^\prime$ cannot be complete or cannot be a structure at all. First, it is easy to see that when condition (iii) does not hold for $G^\prime$ then a structure $\mathcal S^\prime$ with $G^\prime \sim \mathcal S^\prime$ cannot be complete. 

Now suppose condition (ii) does not for $G^\prime$. That is, there is some $f \in V_1^\prime$ that has incidence with some $x \in V_2 \setminus V_2^\prime$. Thus we have $V_1^\prime \mapsto \mathcal E^\prime$ and $V_2^\prime \mapsto \mathcal V^\prime$ but $\bigcup_{f \in \mathcal E^\prime} Vars(f) \neq \mathcal V^\prime$. Therefore either $\mathcal S^\prime$ does not satisfy Def.~\ref{def:structure} or we cannot actually have $G^\prime \sim \mathcal S^\prime$. \lightning. 

Finally, consider that $G^\prime$ is not structural (Def.~\ref{def:structural}). That is, there is some $V_1^{\prime\prime} \subseteq V_1^\prime$ such that no connected bipartite subgraph $G^{\prime\prime}(V_1^{\prime\prime} \cup V_2^{\prime\prime}, E^{\prime\prime})$ exists in $G^\prime$ with $|V_1^{\prime\prime}| \leq |V_2^{\prime\prime}|$.
%%%
Considering $G^\prime \sim \mathcal S^\prime$, that would imply for $\mathcal S^\prime(\mathcal E^\prime, \mathcal V^\prime)$ either an equation $f \in \mathcal E^\prime$ with no variables (a disconnected vertex $x \in V_1^\prime$), or a redundant subset of equations---number of equations is larger than number of variables appearing in it. Either conditions violate Def.~\ref{def:structure}, so $\mathcal S^\prime$ cannot be a structure even though $G^\prime \sim \mathcal S^\prime$. \lightning. 
$\Box$
\end{proof}

%%%%%%%%%%%%%%%%%%%%%
\section{Proof of Theorem \ref{thm:biclique}}\label{app:coa-npcomplete}
\noindent
\emph{CSDP is NP-Complete.}
\begin{proof}
We shall construct an instance of ENCD and describe its poly\-nomial-time reduction to an instance of CSDP by using Lemma~\ref{lemma:bipartite}.

\textbf{Constructing an instance of ENCD}. 
Let $G(V_1 \cup V_2, E)$ be a bipartite graph such that, for every $V_1^\prime \subseteq V_1$, there is a bipartite subgraph $G^\prime(V_1^\prime \cup V_2^\prime, E^\prime)$  with $|V_1^\prime| \leq |V_2^\prime|$ and $deg(f) < |V_2^\prime|$ for all $f \in V_1^\prime$. 
%that is not a biclique $K_{|V_1^\prime|,\, |V_2^\prime|}$. 
Note that this is the complement property of the structural bipartite graph property (see Def.~\ref{def:structural}). It is meant to ensure that the bipartite complement graph $G^c(V_1 \cup V_2, E^c)$ of $G$ is structural---satisfies Def.~\ref{def:structural}. That is, when we produce $G^c$, we know that it can possibly correspond to a structure $\mathcal S$ such that $G^c \sim \mathcal S$. 
%%%
Let also $G$ have $|V_1|=|V_2|=m$ in order to ensure that such structure $\mathcal S$ will be complete as well---recall that $\mathcal S$ given in CSDP is assumed complete indeed. 

%%%
Now let $G$ and an integer $1 \leq \ell < m$ provide an instance of ENCD for integers $a = \ell$ and $b = m - \ell$. That is, our problem is to decide whether $G$ has a biclique $K_{\ell,\, m-\ell}$. 
%%%
Imposing both of the above properties on $G$, n.b., incurs in no loss of generality w.r.t. ENCD as it does not open a pruning opportunity in searching for a biclique $K_{\ell,\, m-\ell}$ in %combinatorial space of 
powerset $\mathcal P(V_1 \times V_2)$.
%%%
Such a biclique $K_{\ell,\, m-\ell}$, if existing in $G$, can be denoted $C(V_1^\prime \cup V_2^\star, K)$, where $|V_1^\prime|=\ell$ and $|V_2^\star|=m-\ell$, and $K$ is a complete set of edges between $V_1^\prime$ and $V_2^\star$. Note also that $V_1^\prime \subset V_1$ and $V_2^\star \subset V_2$.  

%%%
\textbf{Production of an instance of CSDP from the ENCD one}. 
%%%
%%%
Let $G^c(V_1 \cup V_2, E^c)$ be the bipartite complement graph of $G$, where an edge $(f, x) \in E^c$ if and only if $(f, x) \notin E$ for $f \in V_1$ and $x \in V_2$. 
%%%
Clearly, bipartite graph $G^c$ can be produced in polynomial time from $G$---as mentioned in \S\ref{subsec:hard}, consider the `structure matrix' (biadjacency matrix) of $G$ and run a full scan on it to switch the boolean value of each entry in time $O(|V_1|\cdot |V_2|)$ and then get $G^c$. 
%%%
\textbf{Decision problem correspondence}. 
Now we show that a biclique $K_{\ell,\, m-\ell}$ in $G$, if existing, corresponds to a bipartite subgraph $G^{c\,^\prime}(V_1^\prime \cup V_2^\prime, E^{c\,\prime})$ in $G^c$ that satisfies the conditions (i-iii) of Lemma~\ref{lemma:bipartite}. That is, we show that a yes answer to ENCD implies a yes answer to CSDP. 

%%%
In fact, as $G^c$ is the bipartite complement graph of $G$, then the biclique $C(V_1^\prime \cup V_2^\star, K)$ in $G$ becomes a bipartite subgraph $C^c(V_1^\prime \cup V_2^\star, \varnothing)$ in $G^c$. 
%%%
Now let $G^{c\,\prime}(V_1^\prime \cup V_2^\prime, E^{c\,\prime})$ be such that $V_2^\prime = V_2 \setminus V_2^\star$. We observe that: 

\begin{itemize}
\item[(i)] The presence of biclique $C(V_1^\prime \cup V_2^\star, K)$ in $G$ indicates that $V_2^\star$ could not have contributed to satisfy the complement structural property for $V_1^\prime$, only $V_2^\prime = V_2 \setminus V_2^\star$ could. But such property turns into the structural property in $G^c$, thus $G^{c\,\prime}(V_1^\prime \cup V_2^\prime, E^{c\,\prime})$ must be structural indeed. That is, condition (i) of Lemma~\ref{lemma:bipartite} is ensured. 
%%%%%%%
\item[(ii)] By the fact that we have $C^c(V_1^\prime \cup V_2^\star, \varnothing)$ in $G^c$ we know that, 
for all $f \in V_1^\prime$, there can only be an edge $(f, x) \in E^c$ if $x \in V_2^\prime$ indeed. That is, condition (ii) of Lemma~\ref{lemma:bipartite} is ensured. 
%%%%%%%
\item[(iii)] The presence of biclique $C(V_1^\prime \cup V_2^\star, K)$ of form $K_{\ell,\, m-\ell}$ in $G$ implies that $V_1^\prime$ has size $|V_1^\prime|=\ell$. Besides, $V_2^\prime$ will have size $|V_2^\prime|=|V_2| - |V_2^\star| = m - (m-\ell) = \ell$. That is, we must have $|V_1^\prime| = |V_2^\prime| = \ell$ and then condition (iii) of Lemma~\ref{lemma:bipartite} is ensured as well. 
\end{itemize}

We have then established that the existence of a biclique $C \subset G$ of form $K_{\ell,\, m-\ell}$ implies the existence of a bipartite subgraph $G^{c\,\prime} \subset G^c$, where $G^{c\,\prime}$ satisfies the conditions (i-iii) of Lemma~\ref{lemma:bipartite}. That is, we get a yes answer to CSDP if we find one to ENCD. It remains to show the `only if' part of the correspondence. 

In fact, suppose no biclique $C(V_1^\prime \cup V_2^\star, K)$ of form $K_{\ell,\, m-\ell}$ exists in $G(V_1 \cup V_2, E)$. Clearly, it means that for any $V_1^\prime \subset V_1$ where $|V_1^\prime|=\ell$, there is at least one $f \in V_1^\prime$ such that an edge $(f, x)$ with $x \in V_2^\star$ is missing from $E$. Accordingly, in $G^c(V_1 \cup V_2, E^c)$, we cannot have $G^{c\,\prime} \subset G^c$ with condition (ii) of Lemma~\ref{lemma:bipartite} satisfied. 

ENCD is NP-Complete. Thus CSDP must be NP-Complete as well. 
$\Box$
\end{proof}

\section{Proof of Proposition \ref{prop:causal-ordering}}\label{coa-efficient}
\noindent
\emph{Let $\mathcal S(\mathcal E, \mathcal V)$ be a structure, and $\varphi_1\!:\, \mathcal E \to \mathcal V$ and $\varphi_2\!:\, \mathcal E \to \mathcal V$ be any two total causal mappings over $\mathcal S$. Then $C^+_1$ = $C^+_2$.} 
\begin{proof}
The proof is based on an argument from Nayak \cite{nayak1994}, which we reproduce here in a bit more of detail. Intuitively, it shows that if $\varphi_1$ and $\varphi_2$ differ in the variable an equation $f$ is mapped to, then such variables, viz., $\varphi_1(f)$ and $\varphi_2(f)$, must be causally dependent on each other (strongly coupled). 
 
To show $C^+_1$ = $C^+_2$ reduces to show both $C^+_1 \subseteq C^+_2$ and $C^+_2 \subseteq C^+_1$. We show the first containment, and the second is understood as following by symmetry. Closure operators are extensive, $X \subseteq cl(X)$, and 
idempotent, $cl(cl(X)) = cl(X)$. That is, if we have $C_1 \subseteq C_2^+$, then we shall have $C_1^+ \subseteq (C_2^+)^+$ and, by idempotence, $C_1^+ \subseteq C_2^+$. 

Then it suffices to show that $C_1 \subseteq C_2^+$, i.e., for any $(x^\prime,\, x) \in C_1$, we must show that $(x^\prime,\, x) \in C_2^+$ as well. Observe by Def. \ref{def:tcm} that both $\varphi_1$ and $\varphi_2$ are bijections, then, invertible functions. If $\varphi_1^{-1}(x) = \varphi_2^{-1}(x)$, then we have $(x^\prime,\, x) \in C_2$ and thus, trivially, $(x^\prime,\, x) \in C_2^+$. Else, $\varphi_1$ and $\varphi_2$ disagree in which equations they map onto $x$. But we show next, in any case, that we shall have $(x^\prime,\, x) \in C_2^+$. 

Take all equations $g \in \mathcal E^\prime \subseteq \mathcal E$ such that $\varphi_1(g) \neq \varphi_2(g)$, and let $n \leq |\mathcal E|$ be the number of such `disagreed' equations. Now, let $f \in \mathcal E^\prime$ be such that its mapped variable is $x = \varphi_1(f)$. 
Construct a sequence of length $2n$ such that, $s_0 = \varphi_1(f) = x$ and, for $1 \leq i \leq 2n$, element $s_i$ is defined $s_i = \varphi_2(\varphi_1^{-1}(s_{i-1}))$. That is, we are defining the sequence such that, for each equation $g \in \mathcal E^\prime$, its disagreed mappings $\varphi_1(g)=x_a$ and $\varphi_2(g)=x_b$ are such that $\varphi_1(g)$ is immediately followed by $\varphi_2(g)$. As $x_a,\, x_b \in Vars(g)$, we have $(x_a,\, x_b) \in C_2$ and, symmetrically, $(x_b,\, x_a) \in C_1$. The sequence is of form $s=\langle \underbrace{x,\, x_f}_{f}, \hdots, \underbrace{x_a,\, x_b}_{g}, \hdots, \underbrace{x_{2n-1},\, x_{2n}}_{h} \rangle$.

Since $x$ must be in the codomain of $\varphi_2$, we must have a repetition of $x$ at some point $2 \leq k \leq 2n$ in the sequence index, with $s_k=x$ and $s_{k-1}=x^{\prime\prime}$ such that $(x^{\prime\prime},\, x) \in C_2$. If $x^{\prime\prime}=x^\prime$, then $(x^\prime,\, x) \in C_2$ and obviously $(x^\prime,\, x) \in C_2^+$. Else, note that $x_f$ must also be in the codomain of $\varphi_1$, while $x^{\prime\prime}$ in the codomain of $\varphi_2$. Let $\ell$ be the point in the sequence, $3 \leq \ell \leq 2n\!-\!1$, at which $s_\ell=x_f=x_a$ and $s_{\ell+1}=x_b$ for some $x_b$ such that $(x_f,\, x_b) \in C_2$. It is easy to see that, either we have $x_b=x^{\prime\prime}$ or $x_b \neq x^{\prime\prime}$ but $(x_b,\, x^{\prime\prime}) \in C_2^+$. Thus, by transitivity on such a causal chain, we must have $(x_f,\, x^{\prime\prime}) \in C_2^+$ and eventually $(x_f,\, x) \in C_2^+$. Finally, since $x^\prime \in Vars(f)$ and $\varphi_2(f)=x_f$, we have $(x^\prime,\, x_f) \in C_2$ and, by transitivity, $(x^\prime,\, x) \in C_2^+$. 
$\Box$
\end{proof}

\end{document}